\documentclass[pdflatex, sn-mathphys-num]{sn-jnl}

\usepackage{graphicx}%
\usepackage{multirow}%
\usepackage{amsmath,amssymb,amsfonts}%
\usepackage{amsthm}%
\usepackage{mathrsfs}%
\usepackage[title]{appendix}%
\usepackage{xcolor}%
\usepackage{hyperref}
\usepackage{textcomp}%
\usepackage{manyfoot}%
\usepackage{booktabs}%
\usepackage{algorithm}%
\usepackage{algorithmicx}%
\usepackage{algpseudocode}%
\usepackage{listings}%
\usepackage{circuitikz}
\usepackage{makecell}
\usepackage{bm}
\usepackage{array}
\usepackage{float}
\usepackage[section]{placeins}
\usepackage[export]{adjustbox}
\usepackage{changepage}
\usepackage{subcaption}
\usepackage{fancyhdr}

\theoremstyle{plain}
\newtheorem{thm}{Theorem}[section]
\theoremstyle{plain}
\newtheorem{defn}[thm]{Definition} 
\newtheorem{prop}[thm]{Proposition}

\theoremstyle{definition}

\theoremstyle{thmstyleone}%
\newtheorem{theorem}{Theorem}%

\newtheorem{learningproblem}[theorem]{Learning Problem}
\theoremstyle{thmstyletwo}%

\theoremstyle{thmstylethree}%
\DeclareMathOperator{\CA}{\bm{\mathtt{CA}}}
\DeclareMathOperator{\SA}{\bm{\mathtt{SA}}}

\DeclareMathOperator{\mathworld}{\bm{\mathtt{MathWorld}}}

\definecolor{fixedBlue}{RGB}{50, 60, 90}  
\definecolor{modOrange}{RGB}{200, 90, 20}
\definecolor{fixedFill}{RGB}{240, 245, 250}
\definecolor{dataGreen}{RGB}{40, 120, 80} 
\definecolor{dataFill}{RGB}{230, 242, 235}

\begin{document}

\title[Article Title]{Discovering mathematical concepts through a multi-agent system}

\author*[1]{\fnm{Daattavya} \sur{Aggarwal}}\email{da579@cam.ac.uk}
\equalcont{These authors contributed equally to this work.}

\author*[1]{\fnm{Oisin} \sur{Kim}}\email{osmk3@cam.ac.uk}
\equalcont{Equal main authors}

\author[1]{\fnm{Carl} \sur{Henrik Ek}}

\author[1]{\fnm{Challenger} \sur{Mishra}}

\affil*[1]{\orgdiv{Department of Computer Science and Technology}, \orgname{University of Cambridge}}

\abstract{Mathematical concepts emerge through an interplay of processes, including experimentation, efforts at proof, and counterexamples. In this paper, we present a new multi-agent model for computational mathematical discovery based on this observation. Our system, conceived with research in mind, poses its own conjectures and then attempts to prove them, making decisions informed by this feedback and an evolving data distribution. Inspired by the history of Euler's conjecture for polyhedra and an open challenge in the literature, we benchmark with the task of autonomously recovering the concept of homology from polyhedral data and knowledge of linear algebra. Our system completes this learning problem. Most importantly, the experiments are ablations, statistically testing the value of the complete dynamic and controlling for experimental setup. They support our main claim: that the optimisation of the right combination of local processes can lead to surprisingly well-aligned notions of mathematical interestingness.
}

\maketitle
\tableofcontents

\section{Main}\label{sec:intro}
\subsection{Introduction}
Mathematicians often emphasise the holistic nature of their subject, shaped by the interplay of questioning, concept formation, and answering. According to Alexander Grothendieck, `the decisive thing is often to see the question that had not been seen ... it does not matter whether the proof is trivial or not, which is entirely incidental, or even whether a hasty and provisional proof turns out to be false'. Indeed, `the simple act of writing, naming, and describing ... always precedes the proofs and gives us the means to prove' \cite{grothendieckRecoltesSemaillesReflexions2021}. These quotes underscore the creative power of mathematical questions and concepts, including their ability to unlock a solution. Conversely, it is common for attempts at proof to feed back into our choices of questions and definitions, since one constantly revises them based on this experience. 

We think of mathematics as a dynamic, dialectical process. At a given moment, new concepts, formalised as definitions, are created to address existing problems, and good answers, in the form of proofs, open up new fields of inquiry. This is a continuous evolution; for David Hilbert, `every age has its own problems, which the following age either solves or casts aside as profitless and replaces by new ones' \cite{hilbert1902mathematical}.

The purpose of these quotes is not to invoke timeless authority, but to re-examine the human practice of mathematics in the context of the AI-led case. We consider human mathematical research to improve the artificial one, which is advancing rapidly (§\ref{sec1.2}). Despite tremendous progress, it is still unclear whether current approaches can `research autonomously' (§\ref{refdrawbacks}), often requiring the formulation of new and humanly interesting ideas. This paper studies this ambitious computational problem, inspired by existing accounts of the subject (§\ref{sec1.3})\footnote{To be clear, our goal is to use existing descriptions of mathematical practice to improve the AI-led case.}. 

Our main contribution is a new multi-agent AI system for mathematical discovery modelling the discipline as a holistic process. Conjectures produced through the analysis of data are constrained by their provability, a framework capturing the dynamic interplay of mathematical questioning and answering (§\ref{sec1.7} - \ref{sec1.12}). Although demonstrating the feasibility of combining the components is a major result of this paper, our central claim is that it is the nature of this interaction that is most important in the emergence of novel concepts.

To test this, we conduct experiments, designed to check whether our system can recover interesting notions from data and elementary knowledge. The benchmark is to autonomously formulate and meaningfully relate two definitions of the \emph{Euler characteristic}. In order to emulate historical conditions, the AI must do this by observing polyhedral data, and may only possess `knowledge' of linear algebra (§\ref{sec1.5} - \ref{sec1.6}). 
Our system completes this task, suggesting the `rediscovery' of the idea of \emph{homology}, within certain limits (§\ref{sec:results} - \ref{sec:Discussion}).

Crucially, the results are \emph{ablation studies} (\ref{sec:chi_homology_baselines}). This means removing parts of the model to measure the value of the full interaction and to contextualise the idea of `rediscovery'\footnote{For example, by controlling for the innate capability of the regressor.}. A great challenge of assessing mathematical `interestingness' is that it can be very hard to spot and impossible to define, especially for new ideas (§\ref{refdrawbacks}). The chosen problem avoids this difficulty. Furthermore, a very similar open challenge already existed in the literature (§\ref{sec1.5}) \cite{Harris_2024}.

The value of proofs goes beyond checking the truth of propositions; mathematical questions often evolve in the course of attempting proofs. These ideas, also inspiring the design of our system, are shown elegantly in our first and main case study.
 
\subsection{Euler's conjecture for polyhedra} \label{sec1.4}

One of the most fascinating stories of mathematical discovery surrounds Euler's conjecture on polyhedra, once dramatised by Imre Lakatos \cite{Lakatos_Worrall_Zahar_1976, cromwell1997polyhedra, richeson_eulers_2008, polya1, alama}. At its centre are the concepts of \emph{homology} and the \emph{Euler characteristic} (denoted $\chi$). These topological \emph{invariants}, or signatures that encode information about shape, changed the study of spaces by allowing the use of new algebraic methods. 

Mathematicians now apply $\chi$ and homology in many ways, whilst outside the subject, they are used in fields such as data science and physics \cite{carlsson_topology_2009, green_superstring_2012}. But the Euler characteristic has surprisingly humble beginnings, in the study of polyhedra. During the $19^{\textnormal{th}}$ century, the quantity $\chi:=V-E+F$, where $V$, $E$ and $F$ denote the numbers of vertices, edges and faces, began to be studied seriously for the first time. It may seem surprising that despite thousands of years of interest in these surfaces, it took so long to define this.

In 1758, Euler conjectured $\chi=2$ for all polyhedra, testing his claim in a variety of cases. However, his thesis was soon refuted by many counterexamples, for example the picture frame (Figure \ref{fig:placeholder}). The problem was that another term was needed to account for `holes', since for ordinary polyhedra, the correct statement turned out to be $\chi=2-2g$. Here, $g$ denotes the genus, or number of `holes', which was eventually  defined independently using algebra applied to combinatorial constructions. In modern terms, this was homology\footnote{We will treat this more carefully in §\ref{sec1.6}.}. 

In 1895, this revised equality was proven by Poincaré, when he showed that surfaces deformable to each other must possess the same $\chi$ \cite{poincare_papers_2010}. Looking forward, this will prove to be an exemplary case of the story outlined in  §\ref{sec1.3}, with $\chi$ and homology the concepts influenced by proof attempts. What was perhaps unusual within mathematics was the centrality of a pattern detected from data, $V-E+F=2$. It has been argued that Euler used largely inductive reasoning to arrive at this expression, making it a natural candidate for computational investigation \cite{polya1}. 

\begin{figure}[H]
    \centering
    \includegraphics[width=\linewidth]{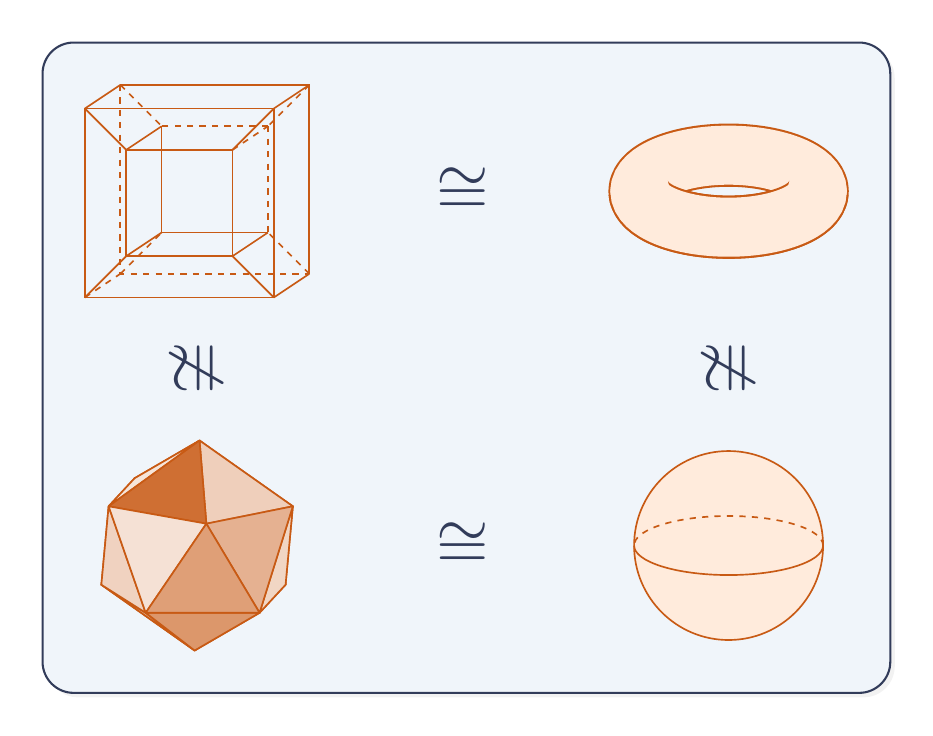}
    \caption{The picture-frame (introduced by L'Huilier) is topologically the same as the torus, but not the icosahedron, which has no hole. In modern language, the symbols denote \emph{homeomorphisms}, a precise notion of topological equivalence.}
    \label{fig:placeholder}
\end{figure}

\subsection{Mathematics and AI} \label{sec1.2}

The case of Euler demonstrates that thoughtful calculation has always played a crucial role in the subject\footnote{Including the most ancient cultures \cite{chemlaHistoryMathematicalProof2015}.}. New forms of computation continue this tradition, through symbolic and statistical approaches \cite{Hales_2014}. AI is being applied in mathematical question-posing and answering; these activities are sometimes formalised as conjecturing and proving. However, such tools have very rarely, if ever, been applied to these areas together, in a combined, iterative approach. 

A line is often drawn connecting past uses of machines in mathematics, such as the classification of finite simple groups, the four colour theorem, and the Kepler conjecture \cite{Appel1977429, Appel1977491}. Whatever the value of this historically, computers are now clearly indispensable to the subject. For example, symbolic computing such as Mathematica has assumed a key role since its invention, across a wide range of areas\footnote{From number theory to mathematical physics.}. Within the scope of `answering', proof assistants have led to the resurgence in interest in formal logic, in adjudicating the correctness of certain arguments and definitions, even in indisputably deep areas of maths \cite{lean-liquid}. Statistical tools can also assist mathematicians. They can conduct literature searches, as suggested by several recent cases of LLMs providing helpful but obscure theorems\footnote{These approaches can go very far \cite{openai_gpt5_math_discovery_2025}.}.

Machine learning has been particularly useful for constructing mathematical objects with certain properties, when there is a direct way of measuring success. This lies, for instance, behind the FunSearch and AlphaEvolve algorithms, which uses genetic programming to evolve LLM-generated objects, such as sets or functions, into new versions with higher fitness scores \cite{romera-paredesMathematicalDiscoveriesProgram2024, novikov_alphaevolve_nodate}. In many such cases, it is clear a priori that these problems can be posed as optimisation tasks; this should not downplay the ingenuity necessary to develop the techniques in any way. 

Advances in computation have changed mathematics and its values throughout history, even on an aesthetic level. The progress summarised above will likely accelerate this \cite{avigad_varieties_2021, venkatesh_thoughts_2024}. With an AI perspective, these developments naturally lead us to the prospects of a `general mathematical intelligence', capable of researching (somewhat) independently. Belief in this is partly based on an old idea: that the subject is a form of serious game, with participants using creative strategies to navigate rule-based systems \cite{VonNeumann1964-VONTFF}. AI, especially \emph{reinforcement learning} (RL), has proven to be very successful in such contexts. 

In 2018, DeepMind released AlphaZero, an RL approach to games like Go and chess \cite{silver_general_2018}. Most interestingly, it performed best when trained on its own trajectories, enabling the efficient exploration of the immensely large space of all possible games. Systems built on this approach have since produced impressive results in mathematics, e.g. AlphaProof, which gave a silver medal performance at the 2024 International Mathematical Olympiad (IMO) \cite{alphaproof, hubertOlympiadlevelFormalMathematical2025}. Here, DeepMind adapted their ideas to formal mathematics, modifying the system for symbolic expressions written in Lean and ruthlessly pruning proof trees of failures. Similar methods have also recently been used by Harmonic and ByteDance \cite{achim2025aristotleimolevelautomatedtheorem, chen2025seedproverdeepbroadreasoning}.   

The utility of computers as tools is indisputable; this perspective will continue to serve practising mathematicians well. However, there are difficulties arising from more ambitious applications, already hinted at by those that we have considered here.
\subsection{Current limitations of AI for maths} \label{refdrawbacks}
All machine learning needs objectives to optimise. It is unclear whether the qualities humans value in their mathematics can be generally formulated in this way, beyond restricted cases. 
 
Whilst an analogy with chess is compelling, discovery is difficult to formulate as a similar two-player game, having both competitive and collaborative aspects \cite{dutilh_novaes_dialogical_2020, thurstonProofProgressMathematics1994}. More broadly, applying AI requires encouraging the right behaviour. In the case of `questioning' activities, such as conjecture generation, a key aim is for the posed statements to be `interesting', as judged by mathematicians. But it is hard to directly encourage interestingness, and so far, the main approach has been for experts to provide considerable problem-specific intuition, for instance in low-dimensional topology or group theory \cite{davies_advancing_2021}. Naturally, this compromises generality. Others have suggested information-theoretic or geometric definitions, but these are currently at a purely theoretical or rudimentary level \cite{mishra_mathematical_2023, bengio_machine_2024}. 

In the case of `proving' activities, the search space is enormous, whilst basic results assert that the general decision problem is undecidable. When RL is applied, rewards are extremely sparse, since a single wrong step can totally derail an attempt \cite{shehper_what_2024, hubertOlympiadlevelFormalMathematical2025}. Work on automated formal provers is partly motivated by the belief that they can contribute to mathematical research \cite{avigad_mathematics_2024}. Where formalisation is already doing this, such tools can certainly help, for example by facilitating large collaborative  projects \cite{bolan_equational_2025}.  
 
Yet we do not believe that provers, considered on their own, qualify as autonomous mathematical intelligences. In competition mathematics, the computer is only required to produce a single proof of a fixed question within a predefined, narrow area. In contrast, research is inherently fluid, requiring the repeated non-trivial reformulation of questions. This frequently needs the creation of humanly interesting concepts, combining ideas from seemingly different mathematical areas, all in dialogue with proof.

There has been enormous recent progress in AI for various aspects of mathematics, especially considered separately. The problems highlighted here stem from the same fundamental question: to what extent can optimisation, narrowly defined, produce `human' outcomes?

\subsection{Human mathematical practice} \label{sec1.3}

We briefly consider the human practice of generating mathematical ideas.

One picture of the subject is a tree, with axioms sitting at the base. Theorems are deduced by rules of inference, creating infinitely many branches. Of course, the overwhelming majority of true statements will not be interesting to humans; we can think of those that are as forming a very thin, branching connected path. Motivated by our interest in automation, we hope to better understand the local processes by which this subset is chosen.

Firstly, we believe that `interestingness' is conditioned on history, depending on which parts have already been explored. Concepts such as group or manifold are often data-compressing abstractions unifying previously disparate phenomena. Created to solve specific problems and clarified through applications, they become central through use and interaction. We are not claiming that mathematical concepts are solely shaped this way but that it plays a considerable role\footnote{Such a view would be particularly reductive in mathematical physics or complexity theory, where external factors are particularly important.}. Closely linked to this is the choice of questions, judged best when they open up new areas.

Many case studies support this narrative, making it almost hard to single out any in particular. For important concepts first invented to tackle particular problems, consider the Abel-Ruffini theorem, following centuries of attempts to generalise the quadratic formula to higher degrees. The proof required a sequence of new definitions: of a \emph{group} acting on roots, of the \emph{splitting field} of the polynomial, and of the \emph{automorphism group} of a field \cite{edwards_galois_1998}. Perhaps more consequential than the theorem itself were these innovations, which introduced a new level of abstraction. 

For the right choice of the question opening vistas, consider Hilbert's tenth problem, on the existence of an algorithm for determining whether an arbitrary Diophantine equation possesses integer solutions. A simple shift of emphasis - rather than seeking such an algorithm, proving that it cannot exist - spurred considerable development in number theory, logic, and their intersection, leading to the solution and beyond\footnote{Of course, this took place against the background of wider developments in algorithms and logic.} \cite{doi:10.1142/9789812564894_0013}. It is striking that such a seemingly subtle change of perspective could prove to be so fruitful. 


Although these examples have had historical importance, playing out over decades and centuries, we also believe that similar dynamics exist in small-scale problem solving such as the proving of a lemma. Naturally, documenting evidence is much harder to find in such cases.

To summarise, good mathematical ideas emerge from an interplay of local processes, some of which have been highlighted here. Our central view is that mathematical value, like many other human practices, is best understood within a complex network of pressures - such as proof, experimentation, and refutation -  making it difficult to directly quantify or optimise for in isolation\footnote{Similar ideas to what we have discussed in this section have been mentioned by many others \cite{ gowers_how_nodate, freedman2025poincare, taoWHATGOODMATHEMATICS}.}. 

\subsection{Euler's conjecture from a statistical perspective}
\label{sec1.5}
In §\ref{sec1.4} we introduced Euler's conjecture, which fits neatly with the above. The question passed through many stages, evolving through the alternating pressures of counterexamples and proof. With an ML perspective, it can also be thought of as a learning problem. The `datapoints' of surfaces gave Euler statistical evidence for the conjecture $\chi=2$, then reshaped through the process of §\ref{sec1.3} into a theorem containing deep definitions.

We will therefore treat the reformulation of the concept of homology as a test for an AI system. It should do this with knowledge that was available at the time of its discovery, to the extent possible. It is also notable that $\chi=2$ was first detected in the restricted case of surfaces homeomorphic to the sphere.

The Euler characteristic formula has many mathematical subtleties. For example, it is only if written as an alternating sum, i.e. $V-E+F=2$, rather than $V+F=E+2$, that it truly suggests its topological aspect\footnote{ This was even missed by Euler \cite{euler1758elementa}.}. It is likely harder to imagine giving a careful definition of homology exclusively through statistics. One of the most challenging aspects of detecting $\chi$ and homology this way is their `local to global' nature; they measure global features of the space from local information (connections between vertices and edges).

A recent essay by the arithmetic geometer Michael Harris states the following challenge: `Give AlphaGeometry 2.0 the list of [Euclidean] axioms and let it generate its own constructions for as long as it takes. Then wait to see whether it rediscovers the Euler characteristic', simply `by analyzing the statistical correlations', as part of a list of questions for a new mathematical AI \cite{Harris_2024}. We did not know about this open challenge when we conducted the experiments in this paper, but were delighted to find it whilst writing.

To state the goal and setup precisely will require modern concepts, presented in §\ref{sec1.6} without much explanation. See \cite{hatcher_algebraic_2001, gallierGuideClassificationTheorem2013, dean2014homology} for further details and proofs (the last is understandable with just linear algebra). 

\subsection{Learning problem and mathematical background} \label{sec1.6}
The experimental data will be a collection of matrices, each containing information about a surface. This is similar in spirit to Euler's analysis of polyhedra, but with more examples and a somewhat modern presentation\footnote
    {We discuss the implications of this and the other choices of this section in §\ref{sec:Discussion}.}.

More precisely, we use incidence matrices associated to a collection of \emph{simplicial complexes} (SCs). These are sets of triangles, edges and vertices (simplices) glued together `nicely'. For a basic picture of a 2-dimensional simplicial complex, think of a polyhedron with triangular faces. Our choice of data is justified by the fact that all closed surfaces (two-dimensional manifolds without a boundary) admit a triangulation\footnote{This means dividing the whole surface `nicely' into connected triangles.}, and that topological properties are invariant to this choice. For the remainder of this paper we specialise to this setting.

Consider the vector space over a 
field $k$ (with characteristic $\neq2$) freely generated by the $i$-simplices (simplices of dimension $i$); denote this space by $C_i$. We define the maps $\partial_i: C_i \rightarrow C_{i-1}$ by sending a simplex to the sum of simplices on its boundary, where the coefficients $\pm1$ respect orientations. It is easy to show that these maps satisfy $\partial_{i-1}\circ\partial_{i}=0$, an important property for the definitions that follow\footnote{`The boundary of a boundary is zero' \cite{Wheeler1989-WHEIPQ}.}.

Given a triangulated surface, the topological information is encoded in a pair of \emph{incidence matrices}, corresponding to the non-trivial $\partial_i$ (i.e. $\partial_1, \partial_2$). The vector spaces above fit into the sequence:

\begin{equation}
 0 \rightarrow C_2\xrightarrow{\partial_2} C_1 \xrightarrow{\partial_{1}} C_0 \xrightarrow{\partial_{0}} 0.
\end{equation}
After we have triangulated a polyhedron, the numbers $V$, $E$ and $F$ become equal to $\textnormal{dim}(C_0), \textnormal{dim}(C_1),$ and  $\textnormal{dim}(C_2)$. It is easy to show that during this process $V-E+F$ remains unchanged. 

This is the remarkable part: in modern terms, the (simplicial) homology is defined as the quotient spaces $H_i :=\textnormal{Ker}(\partial_i)/\textnormal{Im}(\partial_{i+1})$.\footnote{This ignores \emph{singular homology}, which was developed later and shown to be equivalent \cite{hatcher_algebraic_2001}. Both forms of homology count the number of bounding \emph{cycles}, though we do not carefully define this.} The $k$-dimension of $H_i$ is denoted $b_i$, and is called the $i^{th}$ \emph{Betti number} (with respect to the field $k$). 

We use the following very standard results \cite{hatcher_algebraic_2001}:
\begin{prop}
$b_0$ counts the number of connected components.
\label{prop:b0}
\end{prop}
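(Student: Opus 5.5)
Since $\partial_0 = 0$, we have $\textnormal{Ker}(\partial_0) = C_0$, and so
\[
H_0 = C_0/\textnormal{Im}(\partial_1).
\]
The plan is to compute this quotient directly. For an oriented edge $[v,w]$ we have $\partial_1[v,w] = w - v$, so $\textnormal{Im}(\partial_1)$ is the span of all differences $w - v$ over edges $\{v,w\}$ of the complex. Everything therefore reduces to linear algebra on $C_0$, the space with basis the vertices. The connected components we count are those of the 1-skeleton, i.e. the graph of vertices and edges. For a simplicial complex these agree with the path components of its geometric realisation: each simplex is path connected and meets its own edges, so any point can be joined to a vertex within a simplex.

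\textbf{Step 1 (the augmentation maps).} Let $K_1, \dots, K_r$ be the vertex sets of the components. For each $j$ define a linear functional
\[
\varepsilon_j : C_0 \to k, \qquad \varepsilon_j(v) = 1 \text{ if } v \in K_j, \quad \varepsilon_j(v) = 0 \text{ otherwise}.
\]
Combine these into $\varepsilon = (\varepsilon_1, \dots, \varepsilon_r) : C_0 \to k^r$. The map $\varepsilon$ is surjective, because any single vertex of $K_j$ maps to the $j$-th standard basis vector. It also kills $\textnormal{Im}(\partial_1)$: the endpoints $v, w$ of an edge lie in the same component, so $\varepsilon_j(w - v) = 0$ for every $j$. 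Hence $\textnormal{Im}(\partial_1) \subseteq \textnormal{Ker}(\varepsilon)$.

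\textbf{Step 2 (the reverse inclusion).} This is the main step. Take $c = \sum_v a_v v$ in $\textnormal{Ker}(\varepsilon)$. For each $j$ fix a base vertex $v_j \in K_j$. Each $v \in K_j$ is joined to $v_j$ by an edge path $v_j = u_0, u_1, \dots, u_m = v$, so
\[
v - v_j = \sum_{i=1}^{m} (u_i - u_{i-1}) \in \textnormal{Im}(\partial_1),
\]
the sum telescoping. It follows that
\[
c \equiv \sum_j \Big( \sum_{v \in K_j} a_v \Big) v_j = \sum_j \varepsilon_j(c)\, v_j = 0 \pmod{\textnormal{Im}(\partial_1)}.
\]
Thus $\textnormal{Ker}(\varepsilon) = \textnormal{Im}(\partial_1)$. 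By the first isomorphism theorem, $H_0 \cong k^r$, so $b_0 = r$. The argument uses only that the $\varepsilon_j$ take values in $k$, so it works over any field and no characteristic assumption is needed.

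The only point needing care is the sign and orientation convention for $\partial_1$. Reversing the orientation of an edge only changes the sign of its boundary $w - v$, which leaves the span $\textnormal{Im}(\partial_1)$ unchanged, so the convention does not affect the result.
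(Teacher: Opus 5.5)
The paper gives no proof of this proposition. It quotes it as a standard fact and cites Hatcher. Your argument is correct: it is the standard augmentation-map computation, organised differently from the usual textbook treatment.

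\textbf{Comparison with the standard route.} The treatment in Hatcher uses singular chains. It proves $H_0\cong\mathbb{Z}$ for a nonempty path-connected space by joining each point to a basepoint with a singular $1$-simplex, then sums over path components using the splitting of the singular chain complex. You instead work in the simplicial chain complex the paper actually uses, replace paths by edge paths in the $1$-skeleton, and handle all components at once through $\varepsilon\colon C_0\to k^r$.

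Your route is pure linear algebra on the incidence matrix. It shows directly that $b_0=V-\textnormal{rank}(\partial_1)$ is the number of components of the graph encoded by $\partial_1$. It needs no appeal to the simplicial--singular comparison, which the paper mentions only in a footnote. The cost is that it counts edge-path components, so you must supply the identification with topological components separately.

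\textbf{The incomplete step.} That identification is the one place your write-up falls short. Your justification is that every point lies in a path-connected simplex containing a vertex, and that edges are paths. This shows only that each $K_j$ lies in a single path component of the realisation $|K|$, and that every path component contains a vertex. In other words, $|K|$ has at most $r$ path components. You also need the converse: vertices in different $K_j$ cannot be joined by a path in $|K|$.

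This is quick to supply:
\begin{itemize}
\item All vertices of a simplex lie in one $K_j$, because its edges are faces and hence belong to the complex.
\item Let $|K_j|$ be the union of the closed simplices spanned by vertices of $K_j$. Two simplices from different $K_j$ cannot share a nonempty face, so $|K|$ is the disjoint union of the finitely many closed sets $|K_j|$.
\item Each $|K_j|$ is therefore also open.
\end{itemize}
Being nonempty, clopen and path connected, the $|K_j|$ are exactly the connected components, and also the path components. So $b_0=r$ counts them.
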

\begin{prop}
For a connected surface, $b_2$ is $1$ if and only if the surface is orientable.
\label{prop:b2}
\end{prop}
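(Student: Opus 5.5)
We argue directly from the definition $H_2 = \textnormal{Ker}(\partial_2)/\textnormal{Im}(\partial_3)$. Since the surface is two-dimensional, $C_3 = 0$, so $H_2 = \textnormal{Ker}(\partial_2)$ and $b_2 = \dim_k \textnormal{Ker}(\partial_2)$. The task is therefore to describe the 2-chains $c = \sum_\sigma a_\sigma \sigma$ (sum over oriented triangles $\sigma$) with $\partial_2 c = 0$. We show that this space is one-dimensional exactly when the surface is orientable, and zero otherwise.

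\textbf{Key local fact.} In a triangulated closed surface, every edge $e$ lies in exactly two triangles, say $\sigma$ and $\sigma'$. The coefficient of $e$ in $\partial_2 c$ is therefore $\pm a_\sigma \pm a_{\sigma'}$, where each sign records whether the chosen orientation of that triangle induces the chosen orientation of $e$. So $\partial_2 c = 0$ is equivalent to a family of edge conditions: for each edge, $a_{\sigma'} = \varepsilon_e\, a_\sigma$, with $\varepsilon_e = \pm 1$ determined by whether $\sigma,\sigma'$ induce opposite or equal orientations on $e$.

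\textbf{Propagation.} Because the surface is connected and is a closed 2-manifold, its dual graph is connected: triangles are vertices, and two triangles are adjacent when they share an edge. This needs a small argument, and it is the main obstacle. The underlying space is connected, and the link of each vertex is a circle, so the triangles around a vertex form a single cycle of edge-adjacencies. A path in the 1-skeleton can then be converted into a chain of edge-adjacent triangles by rotating around each vertex along the way.

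Given dual connectivity, fixing $a_{\sigma_0}$ for one triangle $\sigma_0$ determines every other coefficient up to sign along dual paths. Hence $\dim \textnormal{Ker}(\partial_2) \le 1$. Moreover, a nonzero cycle has all coefficients nonzero, in fact all equal to $\pm a_{\sigma_0}$.

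\textbf{The dichotomy.} Suppose a nonzero cycle exists. Replace each $\sigma$ by $\textnormal{sign}(a_\sigma)\,\sigma$. This produces a choice of orientation on every triangle such that adjacent triangles induce opposite orientations on each shared edge. That is exactly a coherent orientation, so the surface is orientable. Conversely, given a coherent orientation, the sum of all triangles with their coherent orientations has boundary zero, because each edge appears twice with opposite signs. This gives a nonzero element, so $b_2 = 1$.

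Non-orientable therefore means the edge conditions around some closed dual loop force $a = -a$. The hypothesis $\operatorname{char} k \neq 2$ is used precisely here: it gives $a = 0$ and hence $b_2 = 0$. Over characteristic $2$ the signs disappear, and $b_2$ would always be $1$.
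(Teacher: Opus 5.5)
Your argument is correct, but it does not follow the paper's route. The paper gives no proof here; it lists the proposition among ``very standard results'' and cites Hatcher. There the statement is a special case of the theorem on the top homology of a closed connected $n$-manifold, which is built from local orientations in $H_n(M\,|\,x)$: $H_n(M;R)\cong R$ if $M$ is $R$-orientable, and $H_n(M;R)\cong\{r\in R: 2r=0\}$ otherwise. It is then carried over to simplicial homology by the simplicial--singular isomorphism.

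You instead compute $\textnormal{Ker}(\partial_2)$ directly. Since every edge lies in exactly two triangles, a $2$-cycle is determined by propagating one coefficient through the connected dual graph. This gives $b_2\le 1$ and identifies nonzero cycles with coherent orientations.

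What your route buys:
\begin{itemize}
\item It is elementary.
\item It stays inside the simplicial chain complex of incidence matrices that the paper actually works with.
\item It shows exactly where $\operatorname{char} k\neq 2$ enters, and why $b_2=1$ always in characteristic $2$.
\end{itemize}

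What it costs: it takes as input facts about triangulated closed surfaces that need some local topology (e.g.\ local homology) to justify for an arbitrary triangulation. These are that each edge lies in exactly two triangles, that vertex links are circles, and that topological orientability is equivalent to the existence of a coherent orientation of the triangles. Hatcher's route works in every dimension, over any coefficient ring, and with the topological definition of orientability directly, at the price of singular homology and the local-orientation machinery.

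Two small points. First, over a general field $\textnormal{sign}(a_\sigma)$ has no meaning. Write $a_\sigma=\eta_\sigma a_{\sigma_0}$ with $\eta_\sigma\in\{\pm1\}$, which is well defined precisely because $\operatorname{char} k\neq 2$. So the hypothesis is already used in the implication ``nonzero cycle $\Rightarrow$ orientable'', and your final paragraph is just its contrapositive. Second, you are right to assume the surface is closed, which the proposition as stated leaves implicit: a disk is connected and orientable but has $b_2=0$.
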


It turns out that the other Betti number, $b_1$, is equal to twice the genus, or number of geometric `holes'. Our common-sense idea of a `hole' is ambiguous, which caused difficulties in mathematics until both it and `polyhedra' were sharpened by being incorporated into the framework described here \cite{Lewis01081970, Lakatos_Worrall_Zahar_1976}. 

We can now capitalise on this work. Euler first defined $\chi$ as the alternating sum of the vertices, edges and faces of a polyhedron. Now we extend this to all surfaces:
\begin{defn} \label{thm1}
Given a triangulation of a compact surface, the Euler characteristic is defined as $\chi:=\textnormal{dim}(C_0)-\textnormal{dim}(C_1)+\textnormal{dim}(C_2)=V-E+F$. 
\end{defn}

A crucial discovery was the following alternative:
\begin{defn} \label{thm2}
Given a triangulation of a compact surface, the Euler characteristic can also be defined as $\chi:=b_0-b_1+b_2$.
\end{defn}
A priori, it is not clear that these different formulations are equivalent. Proving that they are was a major breakthrough, generally credited to Poincaré. Although the older definition \ref{thm1} may make the topological invariance of $\chi$ seem rather mysterious, if we know that the $b_i$ are themselves invariants\footnote{It is likely that Poincaré did not know this to a modern standard of rigour; the proof is most natural with singular homology.}, the new definition makes this clear. It pointed the way forward for modern topology by giving old concepts a novel algebraic interpretation. 

Returning to Euler's conjecture, we have the following immediate corollary:
\begin{thm} \label{thm4}
For a closed surface, if $b_0 = 1$, $b_1 = 0$, and $b_2 = 1$, then $\chi := V-E+F=2$.
\end{thm}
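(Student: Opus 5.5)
The plan is to reduce the statement to the equivalence of Definitions \ref{thm1} and \ref{thm2}, i.e.\ to prove the Euler--Poincaré identity
\[
\dim(C_0)-\dim(C_1)+\dim(C_2) \;=\; b_0-b_1+b_2
\]
for the chain complex $0 \rightarrow C_2\xrightarrow{\partial_2} C_1 \xrightarrow{\partial_{1}} C_0 \rightarrow 0$ of a triangulated closed surface. Once this holds, substituting $b_0=1$, $b_1=0$, $b_2=1$ gives $V-E+F=1-0+1=2$ immediately.

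The identity itself is pure linear algebra over $k$, so I would prove it directly rather than appeal to topology. Write $z_i=\dim\textnormal{Ker}(\partial_i)$ and $r_i=\dim\textnormal{Im}(\partial_i)$, with the conventions $\partial_0=0$ and $\partial_3=0$. Then $r_0=r_3=0$.

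First, rank--nullity applied to each $\partial_i: C_i\to C_{i-1}$ gives $\dim C_i = z_i + r_i$. Second, since $\partial_{i}\circ\partial_{i+1}=0$, we have $\textnormal{Im}(\partial_{i+1})\subseteq\textnormal{Ker}(\partial_i)$. So $H_i$ is a well-defined quotient of finite-dimensional spaces, and $b_i = z_i - r_{i+1}$.

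Third, form the alternating sum:
\[
\sum_{i=0}^{2}(-1)^i\dim C_i=\sum_{i=0}^{2}(-1)^i(z_i+r_i).
\]
Rewriting $z_i = b_i + r_{i+1}$, the $r$ terms telescope: each $r_{i+1}$ arising from index $i$ cancels against $r_{i+1}$ at index $i+1$, which carries the opposite sign. What remains are the boundary terms $r_0$ and $r_3$, both zero, leaving $\sum_i(-1)^i b_i$.

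Finally, I would note that $V,E,F$ equal $\dim C_0,\dim C_1,\dim C_2$ for the given triangulation, as stated in §\ref{sec1.6}. This connects the linear-algebra identity to the combinatorial count.

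There is no real obstacle here: the result is a corollary, and the only step needing care is the telescoping bookkeeping, in particular the boundary terms $r_0=r_3=0$ that come from the zero maps at the ends of the complex. I would also remark that the hypothesis of a closed surface is not actually used beyond ensuring the complex is finite and two-dimensional. Its role is interpretive: by Propositions \ref{prop:b0} and \ref{prop:b2}, the hypotheses say the surface is connected and orientable with no `holes', i.e.\ sphere-like, which recovers Euler's original claim.
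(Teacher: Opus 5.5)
Your proposal is correct and follows the paper's route. The paper states the result as an immediate corollary of the equivalence of Definitions \ref{thm1} and \ref{thm2}, and your rank--nullity telescoping argument is the standard proof of that equivalence, which is also how the Lean proofs in Appendix \ref{app:LeanTranslation} discharge such statements from the rank--nullity premises.
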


This leads to the main benchmark task of this paper.
\begin{learningproblem}
\normalsize
Can an AI system recover the definitions and relationship between \ref{thm1} and \ref{thm2} from data and knowledge of linear algebra, without being explicitly coached to do so?
\label{lp1}
\end{learningproblem}

In our dataset, $V$, $E$, and $F$ correspond to heights and widths of matrices, and $b_i$ can be expressed in terms of ranks and nullities. So it should be possible for a well-designed system to succeed without any prior direct encoding of the concepts. A relationship between definitions \ref{thm1} and \ref{thm2} must be generated without the expressions for $b_i$ or $\chi$ being provided in any way. For example, recovering Theorem \ref{thm4} would complete the challenge, which is essentially what happened historically.

The relevant information is contained in the incidence matrices, but only implicitly. In order for an AI to produce a statement of the required form, it will be necessary for it to come up with the relevant concepts in parallel. In contrast to many of the situations of §\ref{sec1.2}, there is no clear way of posing this as an optimisation problem. As in the examples of §\ref{sec1.3}, the system should learn to identify the value of the concepts for itself. 

The types of surfaces used were $(i)$ triangulated spheres $(ii)$ triangulated tori $(iii)$ triangulated Klein bottles $(iv)$ disjoint unions of the previous (in this case, we have block diagonal incidence matrices). We constructed the data using Delaunay triangulations and gluing, available as a published repository\footnote{\url{https://github.com/daattavya98/surface_triangulations}}. The code samples random triangulations of these surfaces, and also supports higher genera and non-orientable spaces. 

\subsection{Reinforcement learning} \label{sec1.7}
Drawing inspiration from the previous discussion, we present a new multi-agent AI system for mathematical discovery. Reinforcement learning provided a plausible framework within which the pursuit of simple goals can produce rich behaviour through exploration and competition. For more background, see \cite{suttonReinforcementLearningIntroduction2020}.

The formal setup is an \emph{environment}, modelling a real situation as a tuple $M=(S, A, R, P) $. This is a state space, action space, rewards, and transition probabilities, collectively known as a Markov Decision Process (MDP). In the finite case, one can think of a graph of nodes corresponding to states, with edges labelled by probabilities and rewards. 

The fundamental loop involves an \emph{agent} choosing an action $a$ in a state $s$, which transitions it to a new state $s'$ via the probability $p(s' |s,a)$, resulting in a reward $r(s,a,s')$\footnote{This can also be probabilistic.}. All of this is strictly determined by $M$. For example, in chess $S$ is naturally taken to be the space of board positions and $A$ individual moves. Then $P$ is deterministic and $R$ is only allocated at the end of games. Given an MDP, the sequence $\{s_0, s_1,..\}$ is a Markov process. We call the position within this the \emph{timestep} $t$.

Starting from $s_0$, the agent aims to maximise its reward $\sum_{t=0}^\infty \gamma^tr(s_t, a_t, s_{t+1})$, where $\gamma\in(0,1)$ is a discount factor (assigning less weight to future rewards). It can do this by choosing the right \emph{policy}, a distribution $\pi: S \rightarrow A$. There is often a distinguished $s_T\in S$, called the \emph{termination} state, which, if it is reached, ends the whole process. A full run $\{t_0,...,t_N\}$ is called an \emph{episode}.

MDPs find applications in economics and other areas. Under certain assumptions\footnote{For example, finite state and action spaces and bounded rewards.}, one can compute optimal policies with linear programming, assuming full knowledge of the MDP. However, in complex settings, where transitions are 
unknown and the $S$ and $A$ may be infinite or continuous, 
data-driven approximation techniques become necessary. This is really what is referred to as reinforcement learning. For instance, one can parameterise policies and expectations of future reward using neural networks.

During training, the agent samples through many possible actions and states, learning a policy that approximately maximises its accumulated reward. Next, actions are sampled exclusively from this in a final episode, called the \emph{execution}.
In multi-agent reinforcement learning (MARL), multiple actors act within the same MDP, introducing new challenges \cite{marl-book}.

It is perhaps surprising that such a simple formalism has achieved so much in recent years. 

\subsection{Introducing the system}

Our system uses two agents, called the \textbf{Conjecturing} and \textbf{Skeptical} agents ($\CA$ and $\SA$), to model the feedback loop between mathematical questioning and answering. The main goal of the $\CA$ is to produce a statement that is `provable enough'; the goal of the $\SA$ is to prevent this. The $\CA$'s statements are drawn from $\mathcal{S}$, denoting the space of all statements producible with a set of variables and operators\footnote{This includes false or even ill-formed examples.}. The environment, $\mathworld$, models the objects and proofs encountered in a particular phase of research. Concretely, this is a set of mathematical data $\mathcal{D}=\{d_i\}$ and a way of measuring provability $\rho:\mathcal{S}\rightarrow \mathbb{R}$. 

During an episode, the $\CA$ generates a sequence of statements by analysing the data. Meanwhile, the $\SA$ controls which parts of $\mathcal{D}$ the $\CA$ is allowed to see. Each statement is scored according to $\rho$. If one exceeds a provability threshold, the episode ends, with the $\CA$ receiving its largest reward and the $\SA$ receiving an equally large negative reward. The system must typically go through many false conjectures to reach this point. The $\CA$ also receives a small reward for longer statements, a deliberately simple proxy for complexity. Finally, if $t$ reaches fifty, the episode terminates anyway.

\subsection{Conjecturing agent} \label{sec1.9}
The role of the Conjecturing agent is to model mathematical questioning activities, producing a series of candidate statements. These should be mathematically interpretable - this can be non-trivial - and generated by analysing data, ideally in a manner that encourages the local exploration of diverse features. 

The canonical example of a mathematical question is the \emph{conjecture}, often codifying a long-standing open problem. These may drive entire research programs, requiring significant innovation for their solutions. We use the idea of `conjecture' more loosely, in line with the AI literature. The important thing is that $\CA$ poses its own questions, basing this choice on past experience. 

We found that a custom symbolic regression (SR) algorithm largely fulfilled these requirements. SR detects structure in a dataset by fitting analytic expressions to it, and is particularly useful with preexisting knowledge about what form this is likely to take. In our system, the $\CA$ controls regressors, producing statements $s$ by minimising a `loss' from $\mathcal{S}$ to $\mathbb{R^+}$:
\begin{equation} \label{regressorLoss}
\mathcal{L}(s) \propto \textnormal{exp} \bigg( \mathcal{L}_\mathcal{D}(s)-\sum_kp_k \bigg),
\end{equation}
where $\mathcal{L}_\mathcal{D}$ is a `data loss' and the $p_k$ are priors, all of which will be described below\footnote{Note that the exponent accentuates selection pressures and preserves positivity.}. In summary, this expression trades off fidelity to the data against inductive biases on the functional form.

In all SR, preexisting knowledge is incorporated by selecting its symbols: features and operators. As alluded to in §\ref{sec1.6}, in our experiments these will be:
\begin{enumerate}
    \item Operators: arithmetic binaries ($+,-, \times,=$) and generic logicals ($\land, \implies,\neg $)\footnote{These were chosen to form a complete basis in the sense of propositional logic.}. 
    \item Variables: the ranks, nullities, and dimensions of pairs of incidence matrices, each corresponding to a simplicial complex.
\end{enumerate}
Note that \textbf{whilst $b_i$ and $\chi$ can be expressed in terms of these, they are not provided to the SR}. To be spotted by the system, they must be formulated through regression. 

The choice of variables and operators is domain dependent; for example, in number theory one could use the number of prime divisors or arithmetic functions such as the totient or Möbius functions. In general, we imagine that most datasets impose a fairly natural choice for these. In a particular constructed expression there is no need for all of the symbols to be used.

With this setup, SR explores $\mathcal{S}$ by parameterising statements as trees, whose nodes are operators with one or two children, or features with none. Given a candidate expression, e.g. \begin{equation}\label{candidateeqn} \tilde{s}:=\{height(\partial_1)-width(\partial_1)+width(\partial_2)=2\},\end{equation} we first consider the data-dependent term $\mathcal{L}_\mathcal{D}$. The expression is true or false on a datapoint $d_i$, so $\tilde{s}(d_i)$ is a Boolean quantity. For example, Eq. \ref{candidateeqn}  will evaluate to $1$ on topological spheres and $0$ on topological tori. The weighted accuracy and full data-dependent terms are: 
\begin{equation}\label{ref3} \mathcal{W}_\mathcal{D}(\tilde{s})=\frac{\sum_i\lambda_i \tilde{s}(d_i)}{\sum\lambda_i}, \; \; \; \mathcal{L}_\mathcal{D}(\tilde{s})\propto \textnormal{exp}(\alpha(1-\mathcal{W}_\mathcal{D}(\tilde{s}))),\end{equation} 
where the $\lambda_i$ control attention given to each datapoint (we will return to this), and $\alpha$ is a hyperparameter. The (positive or negative) priors, $p_k$, in Eq. \ref{regressorLoss} encourage or discourage the use of operators or features, as well as ways of composing them\footnote{For example, to penalise particular errors.}.

For the actual statements, we use a two-step process, intended to reflect the structure of mathematical expressions. The first stage makes multiple atomic formulae (informally, features), each produced by a separate regressor. We call these first regressors the `Feature spotters', which learn from a specific local patch of the data $U\subset \mathcal{D}$, parameterised by coefficients $\lambda_i^U$\footnote{We will often suppress the patch notation in the $\lambda_i$}. Note that Eq. \ref{candidateeqn} could be produced from a patch highly weighted towards topological spheres.  The atomic formulae are then treated as variables by a final regressor, called the `Scaffolder', generating a full logical expression. This decomposition simplifies the search space, encouraging the exploration of features which are combined into a global structure.

The state space of the $\CA$ consists of the weights $\lambda_i$, priors $p_k$, and the number of features to generate. At each timestep, the $\CA$ updates these control parameters, deliberately biasing regression towards statements with a certain form.
This is summarised in Figure \ref{fig:SP}.
We used PySR to implement it \cite{cranmerInterpretableMachineLearning2023}, an SR package designed for scientific applications. Its main advantage was considerable flexibility in defining custom operators, constraints, and objective terms.

\begin{figure}[H]
    \vspace{-0.5cm}
    \includegraphics[
        width=1.05\textwidth, 
        center, 
        keepaspectratio
    ]{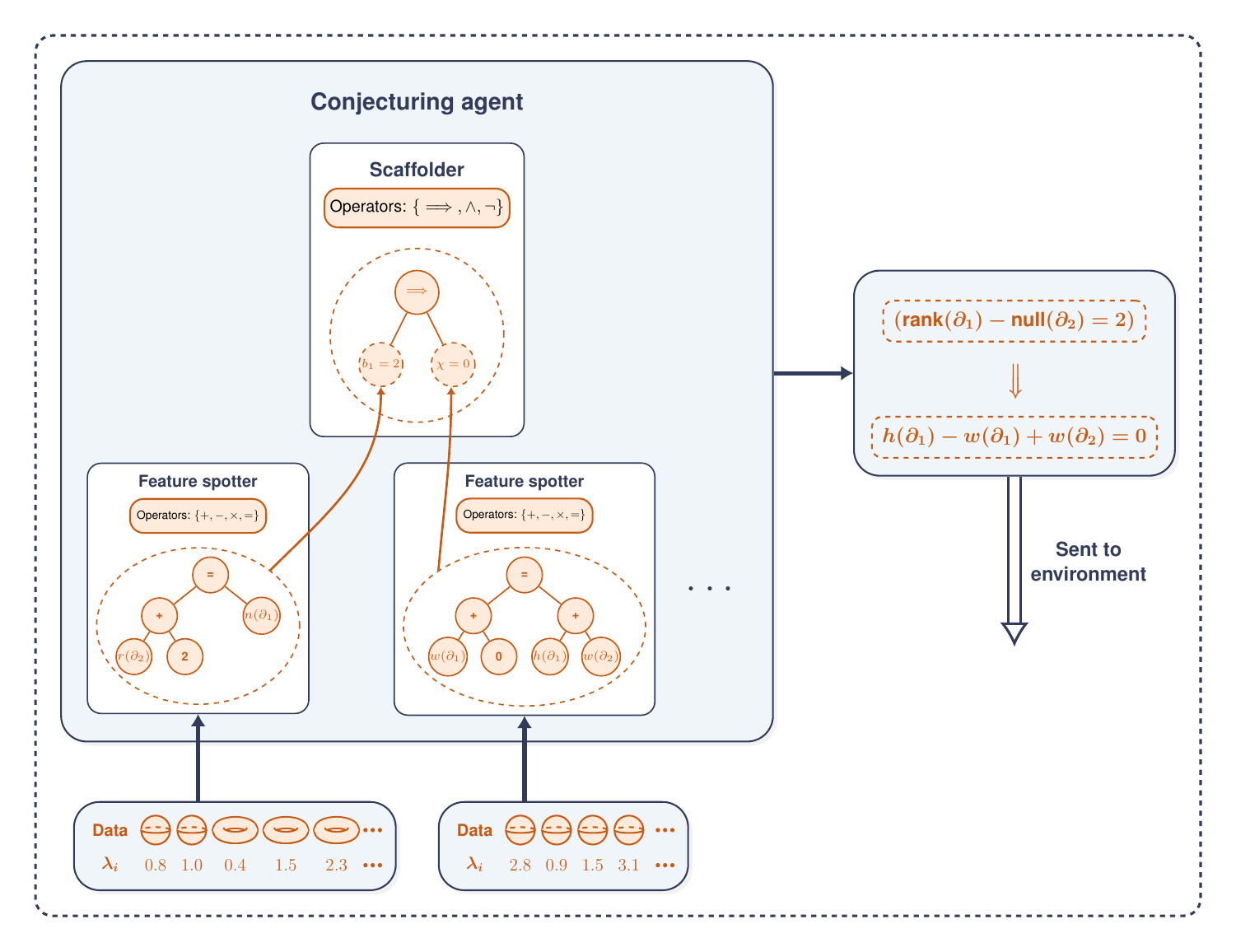}
    \caption{ 
    Summary of the Conjecturing agent. At each timestep, it picks its priors and chooses the number of features, then runs the first round of regression. This produces atomic formulae over patches of data, each governed by $\lambda_i$. In the second round of regression, these are combined by the scaffolder into a global statement. The variables and operators used are tailored to the application; a successful instance of learning is shown in the box on the right, with the heights and widths corresponding to the dimensions of vector spaces.
    Once in the environment, the statement gets translated into Lean. Quantifiers are implicitly added during the translation; the regression only has the variables and propositional symbols in its vocabulary. The Conjecturing agent cannot access the Lean statement. For further details, see §Appendix \ref{app:LeanTranslation}.
    }
    \label{fig:SP}
\end{figure}

\subsection{Skeptical agent}
\label{sec:1.10}

Beyond revising questions and using proof feedback,
another process is useful for our system to capture. In mathematics, partial attention to `data' can enable the discovery of patterns. 

$\chi=2$ was first spotted for polyhedra homeomorphic to spheres, with tori only noticed as counterexamples afterwards. 

Another case is the Riemann-Roch theorem, which expresses the dimension of the space of meromorphic functions on a complex surface in terms of topological information. Riemann initially proved this result for the canonical bundle\footnote{More specifically, he showed that the number of holomorphic $1-$forms is equal to the genus.}. It was only later, working with others, that he allowed singularities, giving the statement for all `datapoints' of Riemann surfaces \cite{riemann_theorie_1857, Roch+1865+372+376}. 

This is not just highlighting generalisation. Instead, restricting focus to a portion of mathematical data can make patterns easier to detect, which may then be reinterpreted and extended to a wider collection of objects. In the context of AI, we may think of this as an \emph{evolving data distribution}.

We model this through the $\lambda_i$ coefficients (see Eq. \ref{ref3}), changed dynamically by the Skeptical agent. Each feature spotter has its own set of these, and the scaffolder sees a union of datapoints across patches (taking the maximum $\lambda_i^U$ if the same $d_i$ is visible to multiple). At each $t$, the $\SA$ chooses a subset of the $\lambda_i$ to update, within fixed bounds. One can imagine it turning on a collection of counterexamples to the $\CA$'s first conjecture as learned behaviour. At the beginning of an episode, a fraction are initialised randomly, and the remainder are set to $0$\footnote{Recall that no tori were initially `visible' to Euler.}.

The goal of the $\SA$ is the opposite of the $\CA$'s. This competitive game produced good results and reflects aspects of mathematical practice. A partially cooperative variant may be more true to life, and could easily be implemented with the setup that will be discussed next. However, in reality, the discount factor $\gamma$ dampens the $\SA$'s opposition as an episode progresses. Since many episodes produce proven results (following training), we think of the $\SA$ as preventing an early and trivial statement terminating an episode too soon.

\subsection{Environment and provability} \label{sec1.11}
$\mathworld$ contains $(i)$ mathematical data $\mathcal{D}$ and $(ii)$ a provability measure $\rho:\mathcal{S}\rightarrow \mathbb{R}$.

A good dataset, chosen by the user, should contain a partially unknown and potentially interesting mathematical structure. We mean something somewhat specific by this: in the case of Euler, tables of $V, E,$ and $F$ arranged appropriately, or in the modern case of the BSD conjecture, tables of local and global invariants of elliptic curves\footnote{This narrative has been mentioned many times in the context of computers and maths. Naturally, in a typical application, the structure may be much less significant than BSD or $\chi$. \cite{buzzardAIMath}} \cite{euler1758elementa, birchNotesEllipticCurves1965}.

The provability score models the proving side of mathematics, assigning $\rho(s)$ to measure how close $s$ is to being `provable' with the system's knowledge. In general this may not be well defined, but we can consider simple approximations. Given an automated prover $A$, let:

\begin{equation}
\rho(s) :=
\begin{cases}
\text{1}, & \text{if $A$ finds a proof for $s$ before timeout}, \\
\text{0}, & \text{else}.
\end{cases}
\label{eq:discrete_proof_metric}
\end{equation}
In most of our experiments we used this discrete metric. 

One choice for $A$ is an LLM prover, e.g. \emph{Lean Copilot} \cite{song_lean_2025}. This attempts to prove the posed statements using a set of relevant theorems as background, such as the rank-nullity theorems for the $C_i$. For translation into Lean, we used a handcrafted code, with a header and premise set containing what we wanted $A$ to `know'; this prevented any hallucination effects caused by autoformalisation \cite{wu2022autoformalization}. 

Since our experiments mostly took place within linear algebra, we were also able to prove any correct statements with a prewritten proof, in which case the system is LLM free\footnote{This was done through a combination of tactics including $\mathtt{grind}$ and $\mathtt{ring}$.}. The tests produced the same qualitative results with either approach. We also tried some simple alternatives to $A$, discussed further in §\ref{sec:chi_homology_baselines}.

For a statement to obtain any reward or trigger termination, it must pass simple non-degeneracy checks. As an example, we discourage formulas that are true on all of $\mathcal{D}$, adding those that are noticed to the premises\footnote{Such as the rank-nullity theorems.}. These are somewhat discretionary choices, but the relationship between the knowledge of $A$ and the data should have a large impact on performance.

It is important that even a score of zero informs subsequent questioning. Unlike RL-based formal provers such as AlphaProof, we do not treat the prover as an agent, having it instead as part of the environment.

\subsection{Summary and optimisation} \label{sec1.12}
Figure \ref{fullsystemfig} summarises the system and shows the flow of information during an episode. Although components may be changed in a particular application, the overall structure should be domain invariant.
 
Optimisation was through multi-agent deep deterministic policy gradient (MADDPG) \cite{loweMultiAgentActorCriticMixed2017}. The central challenge of many agents is that, from the perspective of one, the environment also contains all the others, violating stationarity. MADDPG addresses this with \emph{centralised} training and \emph{decentralised} execution. During training, each agent can see the states and actions of others and estimates their value functions\footnote{This is the estimate of the accumulated reward given a policy and state.} using this knowledge. This recovers the MDP formalism\footnote{Importantly, this allows the use of the policy gradient theorem \cite{loweMultiAgentActorCriticMixed2017, SuttonPolicyGradient}.}. In execution, agents act  independently. The learned policies should be empirically `robust' to the opponents' actions, but in such complex games guarantees of reaching equilibria rarely exist \cite{marl-book}. 

\begin{figure}[H]
    \includegraphics[
        width=1\textwidth, 
        center, 
        keepaspectratio
    ]{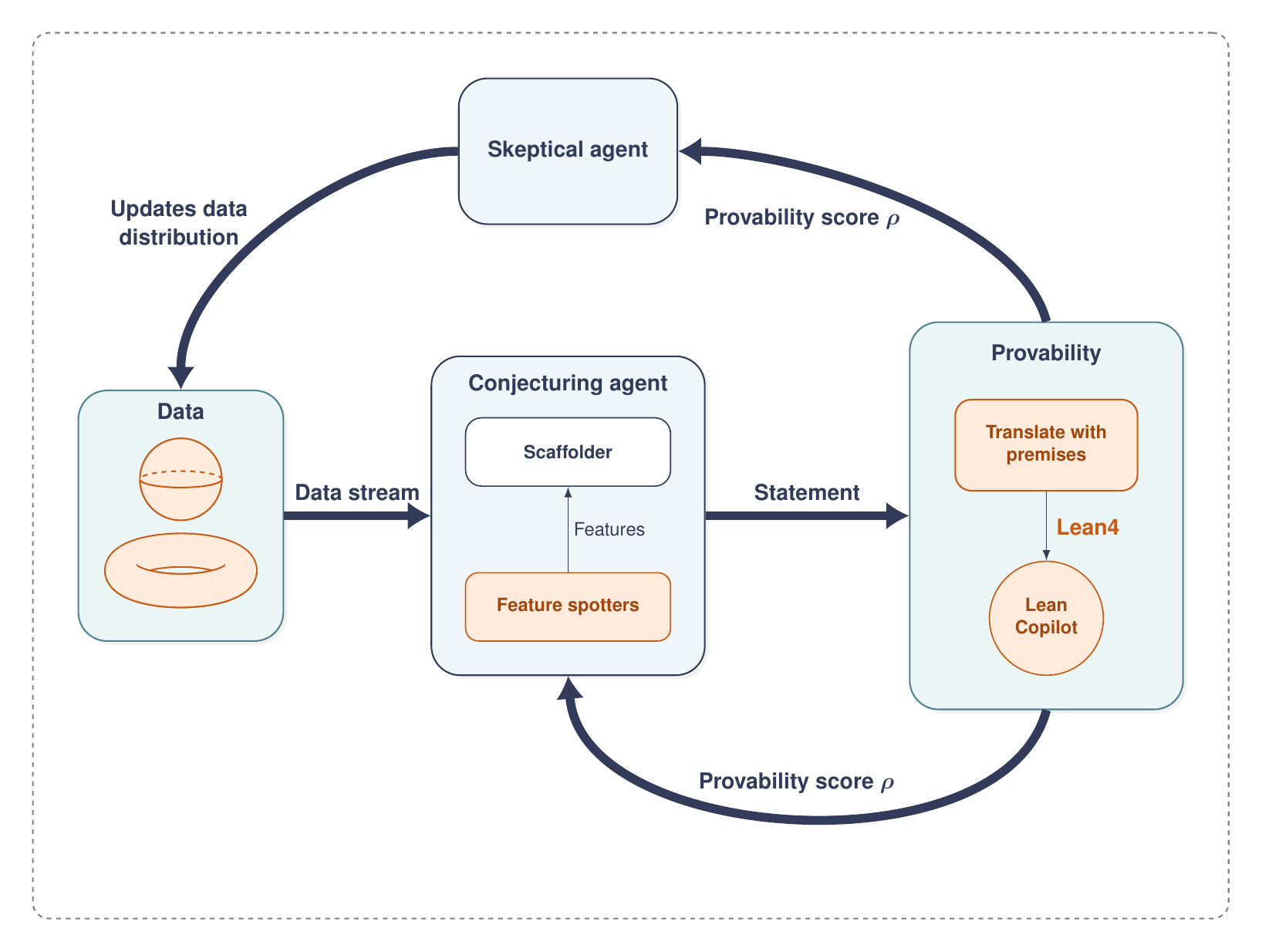}
    \caption{Summary of the full system. The Conjecturing agent produces statements until one crosses the provability threshold whilst passing simple checks, at which point the episode terminates and it receives its greatest reward. At intermediate steps, it receives a smaller reward for longer statements, measured in terms of tree size. The Skeptical agent modifies the data distribution. The colour
    \textcolor{fixedBlue}{\textbf{blue}} is used for the agents and the flow of information, whilst \textcolor{dataGreen}{green} is used for elements that are in the environment. \textcolor{modOrange}{\textbf{Orange}} represents components that should be tailored to an application.  Although the game is competitive, it is not zero-sum. 
    }
    \label{fullsystemfig}
\end{figure}
\section{Results}\label{sec:results}
Recall that our benchmark task was to formulate and relate two definitions of the Euler characteristic from knowledge of linear algebra and data. Ultimately, we would like to apply this system in new areas, and the modularity of the components was designed with this in mind. The advantage of a known problem is that we already know what the `interesting' concepts are, allowing easy interpretation of the results and a proof-of-concept. 

Given an interesting statement in  $\mathcal{S}$, we can reasonably assume that a collection of neighbours express the same meaningful content (with slight variations). So we pose our problem as recovering a relation between two definitions of $\chi$, rather than a particular example of this, such as Theorem \ref{thm4}. 

\subsection{Premises and data} \label{sec:2.1}
Having chosen $\CA$ as a regressor, it must learn the right symbolic expressions. But the exploration of the combinatorially growing space $\mathcal{S}$ poses many challenges\footnote{We will later demonstrate this by running experiments with only the regressor.}. 

Creating a statement with $\chi$ given through Betti numbers requires the system to notice these. For example:
\begin{equation} \label{thm3}
  \{b_0 = 1\} \land \{b_2 = 1\} \land \{b_1=0\}\implies \{V-E+F=2\},  
\end{equation}
can only be sensibly found if $b_0$ and $b_2$ sufficiently vary over $\mathcal{D}$. 

To test in a variety of settings, we use the following datasets (and associated premise sets, $\mathcal{P}$) for different experiments:
\begin{enumerate}
    \item $\mathcal{D}_0$: \{spheres, tori\} with $\mathcal{P}_0$: \{rank-nullity theorems\}, $\mathcal{P}_1: \{b_0=1\}$ and $\mathcal{P}_2: \{b_2 = 1\}$,\\
    \item $\mathcal{D}_1$: \{spheres, tori, Klein bottle\} with $\mathcal{P} = \{\mathcal{P}_0,\mathcal{P}_1\},$\\
    \item $\mathcal{D}_2$: \{spheres, tori, disjoint unions of previous\} with $\mathcal{P}=\{\mathcal{P}_0 \},$\\
    \item $\mathcal{D}_3$: \{spheres, tori, Klein bottle, disjoint unions of previous\} with $\mathcal{P}=\{\mathcal{P}_0\}.$
\end{enumerate}

These surfaces are some of the basic building blocks in topology, combined in equal numbers ($50-200$ of each over different experiments) to ensure balance. The \emph{classification theorem} says that any connected, compact surface without boundary is homeomorphic to a sphere or a connected sum of tori or projective planes \cite{gallierGuideClassificationTheorem2013}. The names are up to homeomorphism, so different tori or spheres may have considerable variation in $V$, $E$ and $F$. Our construction method meant that the distributions of these  were approximately normal.

For $\mathcal{D}_0$ we include $b_0=1$ and $b_2=1$ (representing connectedness and orientability) in $\mathcal{P}$, and just search for statements like $\{{b_1=0}\implies\chi=2\}$. For $\mathcal{D}_2$ the only premises are rank-nullity theorems, so we may hope for a conjecture similar to Eq. \ref{thm3}. Even if premises involving $b_i$ are used by $A$, the only effect is through $\rho$ feedback. In a new problem, finding an appropriate relationship between $\mathcal{P}$ and $\mathcal{D}$ may require experimentation.

We say that a model has noticed $\chi$ and/or $b_i$ if the $\CA$ makes a statement with an atomic formula containing either\footnote{Up to allowed substitutions from the premise set.}:
\begin{align*}
    \chi &: \text{dim}(C_0) - \text{dim}(C_1) + \text{dim}(C_2),\\
    b_i &:  \text{null}(\partial_{i}) - \text{rank}(\partial_{i+1})\quad i\in\{0, 1, 2\}.
\end{align*}
Proven conjectures with $\chi$ and/or $b_i$ are a weaker proxy for using the concepts than statements which clearly relate them meaningfully. This is useful to analyse.

\par
We standardise our experiments, fixing the total number of steps during training and fifteen episodes during evaluation. All training and evaluation experiments were conducted on an ordinary laptop, an Apple M-series ARM64 CPU without any dedicated hardware.

\subsection{Different models}\label{sec:chi_homology_baselines} 

To measure the difficulty of the Learning problem and the impact of each part of the system, we frame most of our experiments as ablation studies.
This means comparing the performances of different system versions, each with a different component removed, against each other, and crucially against the full case \cite{meyes_ablation_2019}. Ablations test our main claim: the importance of a full dynamic for generating valuable mathematical concepts.

The models used are:
\begin{itemize}
    \item Only $\CA$:  Just the Conjecturing agent. Without the $\SA$, every datapoint has a fixed weight of $\lambda_i=1$. The point is to study the intrinsic difficulty of the Learning problem; since this has been posed as finding the right symbolic expression, we think of this as `controlling for' the regressor. The $\CA$'s loss parameters are chosen from exactly the same space as the full system, but with no RL, they are randomised at each step, and a full $50$ steps are run in each episode. 
    \\
    \item $\mathcal{M}_0$ - Full System: The system with all agents and environmental components. In $\mathcal{M}_0:\CA+\SA+$Provability, we use the usual automated prover $A$ that assigns the discrete score $\rho(s)$ (Eq. \ref{eq:discrete_proof_metric}). In $\mathcal{M}_0+$Noise : $\CA+\SA+$Noisy Provability, we use a stochastic variant, adding Gaussian noise to $\rho(s)$ and reducing the provability threshold. This is a basic test for how the system performs in a more uncertain, general setting.
    \\
    \item $\mathcal{M}_1$ - $\CA+\SA$: This model has both agents and RL, so attention to the datapoints varies dynamically. However, there is no reward for provability, to assess its value. We still compute $\rho(s)$ to check for early terminations but it is not passed to the agents, who try to maximise their smaller, intermediate rewards.
    \\
    \item $\mathcal{M}_2$ - $\CA+$Provability: This uses just the Conjecturing agent and the usual provability score, whilst fixing the weights $\lambda_i=1$ for all datapoints, to study the impact of the Skeptical agent.
\end{itemize}
\subsection{$\mathcal{D}_0$ results}
\label{sec:D0results}
We first consider experiments with $\mathcal{D}_0$, the situation most similar to Euler's own. Note that the percentage of statements with $b_1$ is the relevant metric for weakly noticing homology.

In this case, \textbf{the full system completes Learning problem \ref{lp1}}, and \textbf{is the only model to do so}. We believe this is clearly a non-trivial use of the concepts and provides a partial fulfilment of the challenge outlined in §\ref{sec1.5}.

\begin{table}[h]
\centering
\large 
\setlength{\tabcolsep}{3pt} 
\renewcommand{\arraystretch}{1}

\newsavebox{\tablebox}

\sbox{\tablebox}{%
    \resizebox{1\linewidth}{!}{%
        \begin{tabular}{lcccc}
        \toprule
        Model
        & \begin{tabular}[c]{@{}c@{}} Unique atomic formulae\\ (Total statements) \end{tabular}
        & $\chi\;\%$
        & $b_1\;\%$
        & \begin{tabular}[c]{@{}c@{}} \% Proven\\ w/ $\chi$ or $b_1$\end{tabular} \\
        \midrule
        
        Only $\CA$ &
        723 (750)
        & 0
        & 0 
        & 0 \\
        \midrule
        
        $\mathcal{M}_0$ &
        522 (410)
        & \textbf{2.47} [1.07, 4.1]
        & \textbf{5.67} [4.5, 7.22]
        & \textbf{12.72} [9.83, 16.83] \\ 
        \midrule

        $\mathcal{M}_0$ + Noise & 760 (750)
        & 1.15 [0.51, 1.78]
        & 1.99 [1.19, 2.93] 
        & 7.84 [1.85, 16.22] \\ 
        \midrule
        
        $\mathcal{M}_1$ 
        & 861 (750) 
        & 2.44 [1.62, 3.34]
        & 3.24 [2.28, 4.26]
        & 7.02 [5.23, 8.86] \\ 
        \midrule
        
        $\mathcal{M}_2$ & 278 (389)
        & 2.39 [0.81, 4.21]
        & 4.63 [3.07, 6.18] 
        & 5.90 [3.52, 8.87] \\ 

        \bottomrule
        \end{tabular}%
    }%
}

\makebox[\textwidth][c]{%
    \begin{minipage}{\wd\tablebox}
        \usebox{\tablebox}
        
        \vspace{0.5em}
        \caption{ Comparison of all models on $\mathcal{D}_0$. The second column shows the number of different atomic formulae and the total number of statements produced during evaluation. In columns three through five, entries in bold show the model that performed the best with respect to the metric. The proportions of statements with $\chi$, $b_1$ and proven statements with $\chi$ or $b_1$ are shown as 95\% confidence intervals, computed via `cluster bootstrap' over evaluation episodes (10{,}000 resamples). Each resample contains a randomized selection of episodes; we report the range containing 95\% of the observed percentages. Although each episode is independent, within a given episode the statements are highly related, conditioned on the policies learned by the agents during training.
        }
        \label{tbl:initial_numbers}
    \end{minipage}
}
\end{table}
To better understand this, we examine the performance of each model in noticing and using $\chi$ and $b_1$ in Table \ref{tbl:initial_numbers}. This shows several patterns. Firstly, the `Only $\CA$' model \textbf{never} spots $\chi$ or $b_1$, strongly suggesting that the exploration of $\mathcal{S}$ is infeasible without deliberate strategies. Despite using the exact same loss as all other models (drawing from the same space of priors), symbolic regression on its own entirely fails at the learning task.

With the introduction of RL, models $\mathcal{M}_0-\mathcal{M}_2$ produce the interesting concepts $\chi$ and $b_1$ at small but non-trivial rates. We argue that these concepts are helping the agents to achieve their goals, without direct encouragement. This is most true for the full system, with the highest rates in all categories. In the second column, models with the $\SA$ explore many more unique blocks per statement, as a dynamic data distribution improves the efficiency of exploration. 
 
\par Although finding $\chi$ and $b_1$ is evidently hard, the superior performance of $\mathcal{M}_0$ suggests the importance of the full approach. To properly test this we present an ablation study in Figure \ref{fig:ablation_D0}, showing the \emph{relative} metrics between models. Note that we are now distinguishing the ability to notice and use $\chi$ and homology individually.

\begin{figure}[H]
    \centering
    \makebox[\textwidth][c]{%
        \begin{minipage}{1.05\linewidth}            \includegraphics[width=\linewidth]{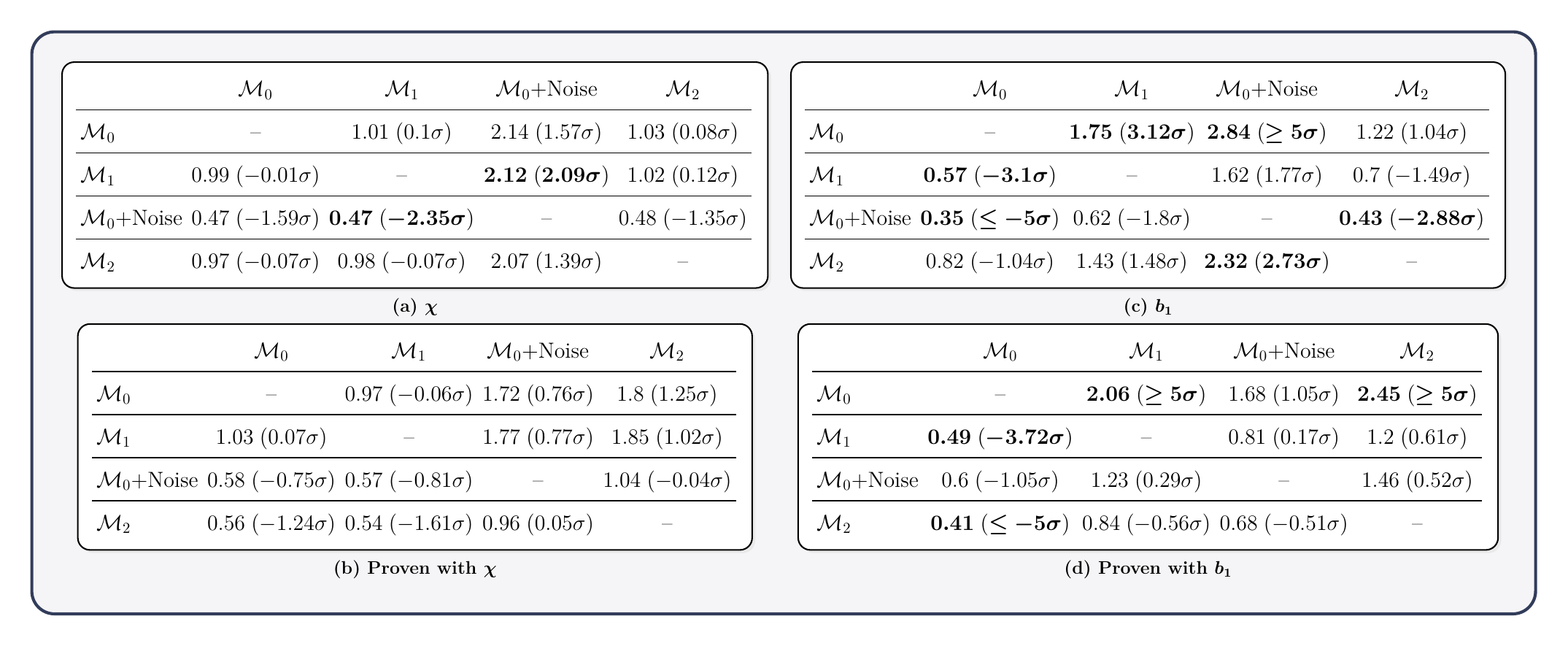}
            \caption{ These tables show the pairwise performance ratios, $i/j$, for models in row i and column j on the dataset $\mathcal{D}_0$, with respect to all metrics. Again, we use cluster bootstrap (10{,}000 resamples). We compute a one-sided effective $\sigma$ by mapping the bootstrapped probability that model $i$ outperforms model $j$ to the corresponding normal distribution. This gives an interpretable measure of the confidence in the direction of the effect, standard practice when the underlying statistic is not Gaussian \cite{cowanAsymptoticFormulaeLikelihoodbased2011}.
            Entries in bold mean that the ratio is empirically significant (i.e. $\geq2\sigma$), but in many cases the effect is stronger. $\geq5\sigma$ ($\leq5\sigma$ resp.) means that the effect is at least $5\sigma$ under our resampling resolution. 
            }
            \label{fig:ablation_D0}
        \end{minipage}%
    }
\end{figure}

It is now clear that the full system significantly outperforms its ablated counterparts. Also notable is the large difference in the rates of noticing the two interesting features, with $\mathcal{M}_0$ using $b_1$ much more than $\chi$. This is an intriguing parallel with the history of §\ref{sec1.4} and §\ref{sec1.5}, supporting our belief that $b_1$ is much more difficult to notice through naive pattern detection. Historically, this was a proof-formed concept, developed to explain the variation of $\chi$. It is gratifying to see a similar story in our experiments, implying that the system genuinely captures feedback between questioning and answering. 

The model with noisy provability underperforms others with another missing component ($\mathcal{M}_1$ in finding $\chi$ and $\mathcal{M}_2$ in finding $b_1$). Clearly, the choice of $\rho$ has a meaningful effect. But recall that in Table \ref{tbl:initial_numbers}, $\mathcal{M}_0$ + Noise is still successful at noticing $\chi$ and $b_1$, and is incomparably better than regression alone. 

\subsection{Varying $\mathcal{D}_i$ results}\label{sec:results_varying_data}
For our final experiments, we vary the dataset over all the options of §\ref{sec:2.1}.

Most importantly, in the case of $\mathcal{D}_2$ \textbf{the full system completes Learning problem \ref{lp1}}, producing a statement containing all of the relevant concepts.
Since $\mathcal{P}_2$ only contains the rank-nullity theorems, we believe that this essentially fulfils the open challenge described in §\ref{sec1.5}. 

To better understand the relationship between data and premises and its impact on learning, we study $(i)$ $\mathcal{M}_0$ vs $\mathcal{M}_1$ on $\mathcal{D}_2$ and $(ii)$ how $\mathcal{M}_0$ performs on varying $\mathcal{D}_i$. 

\begin{figure}[H]
    \centering
    \makebox[\textwidth][c]{\begin{minipage}{1.05\linewidth}
    \includegraphics[width=\linewidth, center]{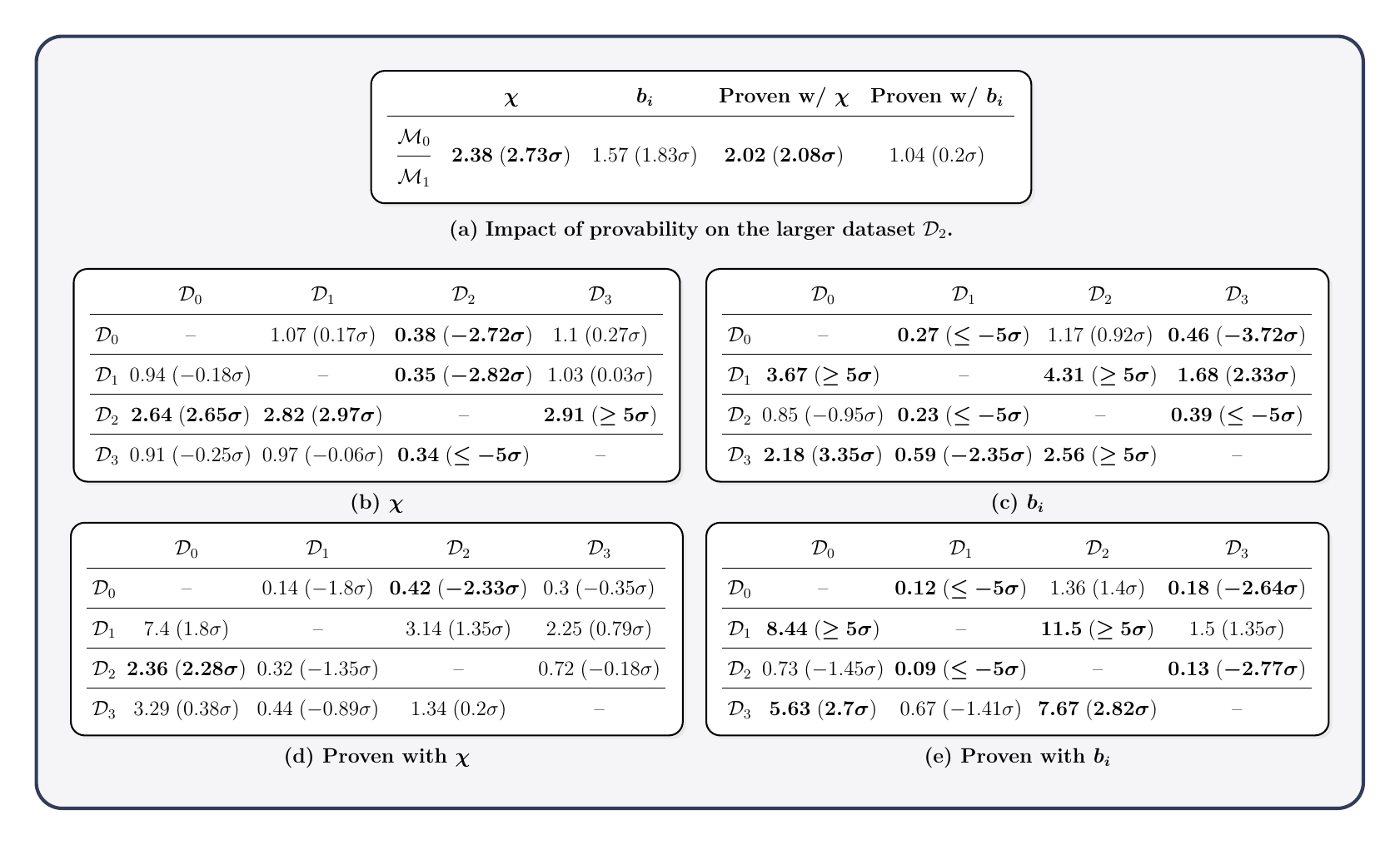}
    \caption{Table (a) gives the performance ratios of $\mathcal{M}_0$ vs $\mathcal{M}_1$ on dataset $\mathcal{D}_2$. 
    The tables in (b) - (e) again show the pairwise performance ratios, $i/j$ for row i and column j, but we are now varying the data and premises rather than the model, and each entry uses the full system (i.e. $\CA+\SA+$Provability). These ablations capture the changing nature of the learning problem as more diverse triangulations are added. We again use cluster bootstrap for sampling (with 10{,}000 resamples) and compute a one-sided effective $\sigma$, with $\sigma \geq2$ being empirically significant. Finally, $\geq5\sigma$ ($\leq5\sigma$ resp.) still indicates that the effect is at least $5\sigma$ under our resampling resolution.}
    \label{fig:vary_data}
    \end{minipage}
    }
\end{figure}

Firstly, Figure \ref{fig:vary_data}(a) shows that provability still improves performance in most cases. Secondly, subfigures \ref{fig:vary_data}(b) - (e) contain several patterns. Introducing the Klein bottle (in $\mathcal{D}_1$) leads to a very large improvement in noticing Betti numbers, compared to $\mathcal{D}_0$ and $\mathcal{D}_2$.  This is probably because the system can notice $b_2$ without $b_0$ also changing\footnote{Relative to spheres and tori.}, as happens in the disconnected $\mathcal{D}_2$ case. We see this further supported by the absence of a comparable improvement in $\mathcal{D}_2$ vs $\mathcal{D}_0$. Lastly, although the largest dataset $\mathcal{D}_3$ still outperforms $\mathcal{D}_0$ on some metrics, the slightly smaller effect size is perhaps due to its now significant diversity and size. Taken together, the final results strongly suggest a link between the mathematical structure encoded in the data and the performance of the system.
 
\par 
The statements that completed the Learning problem are included in Appendix \ref{app:statments}, with a collection of others. These use both definitions of $\chi$, and sometimes the Betti numbers alone, in a variety of suggestive ways.

\section{Discussion}\label{sec:Discussion}
The main theme of this paper has been the inseparable links between conjecturing, proving, and concept-formation in the creation of ideas with mathematical value. In §\ref{refdrawbacks} we argued that AI systems will struggle to create their \emph{own} interesting results without accommodating this in some way. Their ability to solve set problems will likely continue to improve, producing more striking outcomes \cite{feng_semi-autonomous_2026, feng_eigenweights_2026}.
But if the goal is machine mathematical intelligence, there are different challenges. 

In this work, we presented a new model for computation-driven mathematical discovery, based on the evolution of conjectures and data through proof feedback. The benchmark was Learning problem \ref{lp1}, thought of as `rediscovering' the concept of homology.  This required an AI to detect and meaningfully combine the Betti numbers and $\chi$. Our full system completed this task.

The history of Euler's conjecture was a natural test for a mathematical AI. However, the realisation that simplicial complexes and the algebra of chain vector spaces encode topology was a breakthrough\footnote{The first mathematician to strongly emphasise the algebraic perspective was probably Noether \cite{hiltonBriefSubjectiveHistory1988}.}, perhaps hard to entirely control for in our experiments. We consider possible objections. 

Our dataset was essentially a large collection of diverse polyhedra; this was necessary for using modern statistical tools. We believe that presenting these as incidence matrices, preprocessed with standard functions, was the most natural way to present the problem today. Secondly, although the translation `interprets' these as linear maps, the system may only know this through the very coarse-grained proof score. Lastly, the task involved simplicial homology, not other types. But this was the first and crucial definition.

Past work suggests that mathematically significant objects may lie in very small portions of parameter space \cite{shehper_what_2024}. In §\ref{sec:D0results} the intrinsic difficulty of detecting $\chi$ and $b_i$ was demonstrated by the `regression only' trial, which did not produce them. This used the same regressor (as the full system) which, though carefully designed, we needed to control for.

Important ideas can seem obvious in hindsight. The Euler characteristic - `the paradigmatic local-global principle in mathematics' - sharpens the intuitive idea of a hole by counting `building blocks' \cite{Harris_2024}. In geometry, the proof of the Gauss-Bonnet theorem, relating local curvature to global topology, can be thought of as a continuous limit \cite{bott_differential_2008}. Euler's ideas have now developed into the abstract Index theorems, which cover Riemann-Roch as a special case \cite{gilkeyInvarianceTheoryHeat2018}. 

The ablations make a strong argument for the full dynamic. Spotting $\chi$ and $b_i$ helped produce long and provable statements. The evolving attention to data encouraged efficient exploration. Valuable notions emerged through the interplay of local optimisation, a suggestive computational analogue to §\ref{sec1.3}. It is important that the ablations showing this stand on their own, assuming choices made in the environment and agents. 

Details of the system could be changed whilst maintaining the overall flow. It is believed that the sequential ordering of data points impacts learning in ML \cite{polu_formal_2023}, focused on increasing difficulty. The Skeptical agent uses a similar idea to encourage constrained exploration, but could be implemented differently. The regression was adapted from a package designed for natural science and was still not optimal for mathematical expressions \cite{cranmerInterpretableMachineLearning2023}.
Finally, we used a fairly rudimentary prover and direct translation \cite{song_lean_2025}. Using auto-formalisation, is it is plausible that the system would try linear algebra arguments independently.

Given the nature of our collaboration, a proof of principle was a natural first step. But we hope that a version of our dynamic could be applied in research, perhaps by using a revised or enriched notion of `provability'. The history of Euler's conjecture shows that deep conceptual structure can have roots in activities as concrete as the counting of faces, vertices and edges. Our results demonstrate that even the most straightforward proof feedback\footnote{Just a yes or no score.} can transform outcomes.

In a 1947 lecture Turing made many predictions about computers\footnote{Lecture to the London Mathematical Society, 20 February 1947.} \cite{Turing1947}. He anticipated that they would develop unpredictable behaviour, `learn(ing) through experience' to become good at games like chess. In mathematics, they would do `more and more themselves', such as the `manipulations of mathematical formulae'. However, truly intelligent behaviour would require integration with existing knowledge and judgments of value. Speaking specifically about mathematics applications, he argued that `the machine must be allowed to have contact with human beings in order that it may adapt itself to their standards.' We interpret him as meaning that what makes a good machine mathematician is much the same as what makes a good human mathematician; that was the basis of this paper.

\textbf{Acknowledgements:}
The authors thank Minhyong Kim for many helpful discussions. Daattavya Aggarwal also thanks Anand Rao Tadipatri for suggestions on using Lean. Oisin Kim was supported by the Accelerate Programme for Scientific Discovery.

\textbf{Code Availability:} The full code along with trained models is available on the repository: \hyperlink{https://github.com/daattavya98/MathWorld-learning-homology}{https://github.com/daattavya98/MathWorld-learning-homology}.

\bibliography{sn-bibliography}

@article{hilbert1902mathematical,
  title={Mathematical problems},
  author={Hilbert, David},
  journal={Bulletin of the American Mathematical Society},
  volume={8},
  number={10},
  pages={437--479},
  year={1902},
  month={July},
  publisher={American Mathematical Society}
}

@book{Lakatos_Worrall_Zahar_1976, 
    address={Cambridge}, 
    title={Proofs and Refutations: The Logic of Mathematical Discovery}, 
    publisher={Cambridge University Press}, 
    year={1976}, 
    author = {Lakatos, Imre}, 
    editors = {Worrall, John and Zahar, Elie}
}

@book{polya1,
 ISBN = {9780691080055},
 URL = {http://www.jstor.org/stable/j.ctv14164db},
 author = {Polya, George},
 publisher = {Princeton University Press},
 address = {Princeton},
 title = {Mathematics and Plausible Reasoning, Volume 1: Induction and Analogy in Mathematics},
 year = {1954}
}

@phdthesis{alama,
  author       = {Alama, Jesse}, 
  title        = {Formal Proofs and Refutations},
  school       = {Stanford University},
  year         = {2009},
  type = {Doctoral Dissertation},
  url = {https://www.proquest.com/openview/edb2c4352c2d778f7a421c6a357f080d/1?pq-origsite=gscholar&cbl=18750}
}

@techreport{alphaproof,
    author = {Deepmind},
    title = {AI achieves silver-medal standard solving International Mathematical Olympiad problems},
    institution = {Google},
    year = {2024},
    url = {https://deepmind.google/blog/ai-solves-imo-problems-at-silver-medal-level/}
}

@inproceedings{loweMultiAgentActorCriticMixed2017,
author = {Lowe, Ryan and Wu, Yi and Tamar, Aviv and Harb, Jean and Abbeel, Pieter and Mordatch, Igor},
title = {Multi-agent actor-critic for mixed cooperative-competitive environments},
year = {2017},
isbn = {9781510860964},
publisher = {Curran Associates Inc.},
address = {Red Hook, NY, USA},
booktitle = {Proceedings of the 31st International Conference on Neural Information Processing Systems},
pages = {6382–6393},
numpages = {12},
location = {Long Beach, California, USA},
series = {NIPS'17}
}

@book{suttonReinforcementLearningIntroduction2020,
  title = {Reinforcement Learning: An Introduction},
  shorttitle = {Reinforcement Learning},
  author = {Sutton, Richard S. and Barto, Andrew},
  year = 2020,
  series = {Adaptive Computation and Machine Learning},
  edition = {Second edition},
  publisher = {The MIT Press},
  address = {Cambridge, Massachusetts London, England},
  isbn = {978-0-262-03924-6},
  langid = {english}
}

@book{ marl-book,
  author = {Stefano V. Albrecht and Filippos Christianos and Lukas Sch\"afer},
  title = {Multi-Agent Reinforcement Learning: Foundations and Modern Approaches},
  publisher = {MIT Press},
  address = {Cambridge, MA},
  year = {2024},
  url = {https://www.marl-book.com}
}

@inproceedings{SuttonPolicyGradient,
  title = {Policy Gradient Methods for Reinforcement Learning with Function Approximation},
  booktitle = {Advances in Neural Information Processing Systems},
  author = {Sutton, Richard S and McAllester, David and Singh, Satinder and Mansour, Yishay},
  editor = {Solla, S. and Leen, T. and M{\"u}ller, K.},
  year = 1999,
  volume = {12},
  publisher = {MIT Press},
  address = {Cambridge, MA}
}

@misc{chen2025seedproverdeepbroadreasoning,
      title={Seed-Prover: Deep and Broad Reasoning for Automated Theorem Proving}, 
      author={Luoxin Chen and Jinming Gu and Liankai Huang and Wenhao Huang and Zhicheng Jiang and Allan Jie and Xiaoran Jin and Xing Jin and Chenggang Li and Kaijing Ma and Cheng Ren and Jiawei Shen and Wenlei Shi and Tong Sun and He Sun and Jiahui Wang and Siran Wang and Zhihong Wang and Chenrui Wei and Shufa Wei and Yonghui Wu and Yuchen Wu and Yihang Xia and Huajian Xin and Fan Yang and Huaiyuan Ying and Hongyi Yuan and Zheng Yuan and Tianyang Zhan and Chi Zhang and Yue Zhang and Ge Zhang and Tianyun Zhao and Jianqiu Zhao and Yichi Zhou and Thomas Hanwen Zhu},
      year={2025},
      eprint={2507.23726},
      archivePrefix={arXiv},
      primaryClass={cs.AI},
      doi={https://doi.org/10.48550/arXiv.2507.23726}, 
}

@misc{achim2025aristotleimolevelautomatedtheorem,
      title={Aristotle: IMO-level Automated Theorem Proving}, 
      author={Tudor Achim and Alex Best and Alberto Bietti and Kevin Der and Mathïs Fédérico and Sergei Gukov and Daniel Halpern-Leistner and Kirsten Henningsgard and Yury Kudryashov and Alexander Meiburg and Martin Michelsen and Riley Patterson and Eric Rodriguez and Laura Scharff and Vikram Shanker and Vladmir Sicca and Hari Sowrirajan and Aidan Swope and Matyas Tamas and Vlad Tenev and Jonathan Thomm and Harold Williams and Lawrence Wu},
      year={2025},
      eprint={2510.01346},
      archivePrefix={arXiv},
      primaryClass={cs.AI},
      doi={https://doi.org/10.48550/arXiv.2510.01346}, 
}

@misc{cranmerInterpretableMachineLearning2023,
  title = {Interpretable {{Machine Learning}} for {{Science}} with {{PySR}} and {{SymbolicRegression}}.Jl},
  author = {Cranmer, Miles},
  year = 2023,
  month = may,
  number = {arXiv:2305.01582},
  eprint = {2305.01582},
  primaryclass = {astro-ph},
  publisher = {arXiv},
  doi = {10.48550/arXiv.2305.01582},
  archiveprefix = {arXiv}
}

@article{romera-paredesMathematicalDiscoveriesProgram2024,
  title = {Mathematical Discoveries from Program Search with Large Language Models},
  author = {{Romera-Paredes}, Bernardino and Barekatain, Mohammadamin and Novikov, Alexander and Balog, Matej and Kumar, M. Pawan and Dupont, Emilien and Ruiz, Francisco J. R. and Ellenberg, Jordan S. and Wang, Pengming and Fawzi, Omar and Kohli, Pushmeet and Fawzi, Alhussein},
  year = 2024,
  month = jan,
  journal = {Nature},
  volume = {625},
  number = {7995},
  pages = {468--475},
  publisher = {Nature Publishing Group},
  issn = {1476-4687},
  doi = {10.1038/s41586-023-06924-6},
  copyright = {2023 The Author(s)},
  langid = {english}
}

@misc{thurstonProofProgressMathematics1994,
  title = {On Proof and Progress in Mathematics},
  author = {Thurston, William P.},
  year = 1994,
  month = apr,
  number = {arXiv:math/9404236},
  eprint = {math/9404236},
  publisher = {arXiv},
  doi = {10.48550/arXiv.math/9404236},
  archiveprefix = {arXiv}
}

@article{silver_general_2018,
    title = {A general reinforcement learning algorithm that masters chess, shogi, and {Go} through self-play},
    volume = {362},
    url = {https://www.science.org/doi/10.1126/science.aar6404},
    doi = {10.1126/science.aar6404},
    number = {6419},
    journal = {Science},
    publisher = {American Association for the Advancement of Science},
    author = {Silver, David and Hubert, Thomas and Schrittwieser, Julian and Antonoglou, Ioannis and Lai, Matthew and Guez, Arthur and Lanctot, Marc and Sifre, Laurent and Kumaran, Dharshan and Graepel, Thore and Lillicrap, Timothy and Simonyan, Karen and Hassabis, Demis},
    month = dec,
    year = {2018},
    pages = {1140--1144},
}

@article{davies_advancing_2021,
    title = {Advancing mathematics by guiding human intuition with {AI}},
    volume = {600},
    issn = {0028-0836, 1476-4687},
    url = {https://www.nature.com/articles/s41586-021-04086-x},
    doi = {10.1038/s41586-021-04086-x},
    language = {en},
    number = {7887},
    journal = {Nature},
    author = {Davies, Alex and Veličković, Petar and Buesing, Lars and Blackwell, Sam and Zheng, Daniel and Tomašev, Nenad and Tanburn, Richard and Battaglia, Peter and Blundell, Charles and Juhász, András and Lackenby, Marc and Williamson, Geordie and Hassabis, Demis and Kohli, Pushmeet},
    month = dec,
    year = {2021},
    pages = {70--74},
}

@article{hubertOlympiadlevelFormalMathematical2025,
  title = {Olympiad-Level Formal Mathematical Reasoning with Reinforcement Learning},
  author = {Hubert, Thomas and Mehta, Rishi and Sartran, Laurent and Horv{\'a}th, Mikl{\'o}s Z. and {\v Z}u{\v z}i{\'c}, Goran and Wieser, Eric and Huang, Aja and Schrittwieser, Julian and Schroecker, Yannick and Masoom, Hussain and Bertolli, Ottavia and Zahavy, Tom and Mandhane, Amol and Yung, Jessica and Beloshapka, Iuliya and Ibarz, Borja and Veeriah, Vivek and Yu, Lei and Nash, Oliver and Lezeau, Paul and Mercuri, Salvatore and S{\"o}nne, Calle and Mehta, Bhavik and Davies, Alex and Zheng, Daniel and Pedregosa, Fabian and Li, Yin and Samuel and Velingker, Ameya and Schmitt, Simon and Lockhart, Edward and Hughes, Edward and Michalewski, Henryk and Sonnerat, Nicolas and Hassabis, Demis and Kohli, Pushmeet and Silver, David},
  year = 2025,
  month = nov,
  journal = {Nature},
  issn = {1476-4687},
  doi = {10.1038/s41586-025-09833-y}
}

@misc{Harris_2024, 
title={Is AlphaGeometry a dead end?}, url={https://siliconreckoner.substack.com/p/is-alphageometry-a-dead-end}, journal={siliconreckoner.substack.com}, author={Harris, Michael}, year={2024}, month={Jan}}

@book{gallierGuideClassificationTheorem2013,
  title = {A Guide to the Classification Theorem for Compact Surfaces},
  author = {Gallier, Jean and Xu, Dianna},
  year = 2013,
  series = {Geometry and {{Computing}}},
  volume = {9},
  publisher = {Springer Berlin Heidelberg},
  address = {Berlin, Heidelberg},
  doi = {10.1007/978-3-642-34364-3},
  copyright = {https://www.springernature.com/gp/researchers/text-and-data-mining},
  isbn = {978-3-642-34363-6 978-3-642-34364-3},
  langid = {english}
}

@article{cowanAsymptoticFormulaeLikelihoodbased2011,
  title = {Asymptotic Formulae for Likelihood-Based Tests of New Physics},
  author = {Cowan, Glen and Cranmer, Kyle and Gross, Eilam and Vitells, Ofer},
  year = 2011,
  month = feb,
  journal = {The European Physical Journal C},
  volume = {71},
  number = {2},
  pages = {1554},
  issn = {1434-6052},
  doi = {10.1140/epjc/s10052-011-1554-0}
}

@misc{song_lean_2025,
    title = {Lean {Copilot}: {Large} {Language} {Models} as {Copilots} for {Theorem} {Proving} in {Lean}},
    shorttitle = {Lean {Copilot}},
    doi = {10.48550/arXiv.2404.12534},
    publisher = {arXiv},
    author = {Song, Peiyang and Yang, Kaiyu and Anandkumar, Anima},
    month = mar,
    year = {2025},
    note = {arXiv:2404.12534},
}

@article{birchNotesEllipticCurves1965,
  title = {Notes on Elliptic Curves. {{II}}.},
  author = {Birch, B.J. and {Swinnerton-Dyer}, H.P.F.},
  year = 1965,
  journal = {crll},
  volume = {1965},
  number = {218},
  pages = {79--108},
  issn = {0075-4102, 1435-5345},
  doi = {10.1515/crll.1965.218.79},
  langid = {english}
}

@misc{shehper_what_2024,
    title = {What makes math problems hard for reinforcement learning: a case study},
    shorttitle = {What makes math problems hard for reinforcement learning},
    doi ={https://doi.org/10.48550/arXiv.2408.15332},
    publisher = {arXiv},
    author = {Shehper, Ali and Medina-Mardones, Anibal M. and Lewandowski, Bartłomiej and Gruen, Angus and Kucharski, Piotr and Gukov, Sergei},
    month = aug,
    year = {2024},
}

@misc{mishra_mathematical_2023,
    title = {Mathematical conjecture generation using machine intelligence},
    doi = {https://doi.org/10.48550/arXiv.2403.04571},
    publisher = {arXiv},
    author = {Mishra, Challenger and Moulik, Subhayan Roy and Sarkar, Rahul},
    month = jun,
    year = {2023},
    note = {arXiv:2306.07277},
}

@misc{bengio_machine_2024,
    title = {Machine learning and information theory concepts towards an {AI} {Mathematician}},
    doi = {https://doi.org/10.48550/arXiv.2403.04571},
    publisher = {arXiv},
    author = {Bengio, Yoshua and Malkin, Nikolay},
    month = mar,
    year = {2024},
}

@misc{lean-liquid,
  author       = {Commelin, Johan and Topaz, Adam and others},
  title        = {Liquid Tensor Experiment},
  year         = {2022},
  howpublished = {\url{https://github.com/leanprover-community/lean-liquid}},
}

@book{cromwell1997polyhedra,
    author    = {Cromwell, Peter R.},
    title     = {Polyhedra},
    publisher = {Cambridge University Press},
    address   = {Cambridge},
    year      = {1997},
    isbn      = {978-0-521-55432-9}
}

@book{hatcher_algebraic_2001,
    address = {New York},
    title = {Algebraic topology},
    isbn = {978-0-521-79540-1},
    language = {eng},
    publisher = {Cambridge university press},
    author = {Hatcher, Allen},
    year = {2001},
}

@misc{dean2014homology,
  author      = {Dean, Joshua},
  title       = {Homology using linear algebra},
  type        = {Undergraduate research project},
  institution = {Mississippi State University},
  year        = {2014},
  note        = {Undergraduate research project, available at \href{https://osebje.famnit.upr.si/~russ.woodroofe/joshua-dean.pdf}{https://osebje.famnit.upr.si/~russ.woodroofe/joshua-dean.pdf}}
}

@inproceedings{
wu2022autoformalization,
title={Autoformalization with Large Language Models},
author={Yuhuai Wu and Albert Qiaochu Jiang and Wenda Li and Markus Norman Rabe and Charles E Staats and Mateja Jamnik and Christian Szegedy},
booktitle={Advances in Neural Information Processing Systems},
editor={Alice H. Oh and Alekh Agarwal and Danielle Belgrave and Kyunghyun Cho},
year={2022},
url={https://openreview.net/forum?id=IUikebJ1Bf0}
}

@book{dutilh_novaes_dialogical_2020,
    edition = {1},
    title = {The {Dialogical} {Roots} of {Deduction}: {Historical}, {Cognitive}, and {Philosophical} {Perspectives} on {Reasoning}},
    copyright = {https://www.cambridge.org/core/terms},
    isbn = {978-1-108-80079-2 978-1-108-47988-2 978-1-108-79092-5},
    shorttitle = {The {Dialogical} {Roots} of {Deduction}},
    doi = {10.1017/9781108800792},
    language = {en},
    publisher = {Cambridge University Press},
    author = {Dutilh Novaes, Catarina},
    month = dec,
    address = {Cambridge},
    year = {2020},
}

@article{riemann_theorie_1857,
    chapter = {Journal für die reine und angewandte Mathematik},
    title = {Theorie der {Abel}'schen {Functionen}.},
    volume = {1857},
    copyright = {De Gruyter expressly reserves the right to use all content for commercial text and data mining within the meaning of Section 44b of the German Copyright Act.},
    issn = {1435-5345},
    url = {https://www.degruyterbrill.com/document/doi/10.1515/crll.1857.54.115/html},
    doi = {10.1515/crll.1857.54.115},
    language = {de},
    number = {54},
    publisher = {De Gruyter},
    author = {Riemann, B.},
    month = jan,
    year = {1857},
    pages = {115--155},
}

@article{Roch+1865+372+376,
    title = {Ueber die {Anzahl} der willkürlichen {Constanten} in algebraischen {Functionen}.},
    volume = {1865},
    url = {https://doi.org/10.1515/crll.1865.64.372},
    doi = {doi:10.1515/crll.1865.64.372},
    number = {64},
    journal = {Journal für die reine und angewandte Mathematik},
    author = {Roch, G.},
    year = {1865},
    pages = {372--376},
}

@book{poincare_papers_2010,
    address = {Providence, Rhode                     Island},
    series = {History of {Mathematics}},
    title = {Papers on {Topology}},
    volume = {37},
    isbn = {978-0-8218-5234-7 978-1-4704-1840-3},
    doi = {10.1090/hmath/037},
    language = {en},
    publisher = {American Mathematical                     Society},
    author = {Poincaré, Henri},
    translator = {Stillwell, John},
    month = sep,
    year = {2010},
}

@inproceedings{community_lean_2020,
    title = {The {Lean} mathematical library},
    doi = {10.1145/3372885.3373824},
    booktitle = {Proceedings of the 9th {ACM} {SIGPLAN} {International} {Conference} on {Certified} {Programs} and {Proofs}},
    author = {Community, The mathlib},
    month = jan,
    year = {2020},
    pages = {367--381},
}

@misc{bolan_equational_2025,
    title = {The {Equational} {Theories} {Project}: {Advancing} {Collaborative} {Mathematical} {Research} at {Scale}},
    shorttitle = {The {Equational} {Theories} {Project}},
    doi = {10.48550/arXiv.2512.07087},
    publisher = {arXiv},
    author = {Bolan, Matthew and Breitner, Joachim and Brox, Jose and Carlini, Nicholas and Carneiro, Mario and Doorn, Floris van and Dvorak, Martin and Goens, Andrés and Hill, Aaron and Husum, Harald and Mejia, Hernán Ibarra and Kocsis, Zoltan A. and Floch, Bruno Le and Bar-on, Amir Livne and Luccioli, Lorenzo and McNeil, Douglas and Meiburg, Alex and Monticone, Pietro and Nielsen, Pace P. and Osazuwa, Emmanuel Osalotioman and Paolini, Giovanni and Petracci, Marco and Reinke, Bernhard and Renshaw, David and Rossel, Marcus and Roux, Cody and Scanvic, Jérémy and Srinivas, Shreyas and Tadipatri, Anand Rao and Tao, Terence and Tsyrklevich, Vlad and Vaquerizo-Villar, Fernando and Weber, Daniel and Zheng, Fan},
    month = dec,
    year = {2025},
}

@misc{feng_semi-autonomous_2026,
    title = {Semi-{Autonomous} {Mathematics} {Discovery} with {Gemini}: {A} {Case} {Study} on the {Erdős} {Problems}},
    shorttitle = {Semi-{Autonomous} {Mathematics} {Discovery} with {Gemini}},
    doi = {10.48550/arXiv.2601.22401},
    publisher = {arXiv},
    author = {Feng, Tony and Trinh, Trieu and Bingham, Garrett and Kang, Jiwon and Zhang, Shengtong and Kim, Sang-hyun and Barreto, Kevin and Schildkraut, Carl and Jung, Junehyuk and Seo, Jaehyeon and Pagano, Carlo and Chervonyi, Yuri and Hwang, Dawsen and Hou, Kaiying and Gukov, Sergei and Tsai, Cheng-Chiang and Choi, Hyunwoo and Jin, Youngbeom and Li, Wei-Yuan and Wu, Hao-An and Shiu, Ruey-An and Shih, Yu-Sheng and Le, Quoc V. and Luong, Thang},
    month = feb,
    year = {2026},
}

@misc{feng_eigenweights_2026,
    title = {Eigenweights for arithmetic {Hirzebruch} {Proportionality}},
    doi = {10.48550/arXiv.2601.23245},
    publisher = {arXiv},
    author = {Feng, Tony},
    month = feb,
    year = {2026},
}

@article{venkatesh_thoughts_2024,
    title = {Some thoughts on automation and mathematical research},
    volume = {61},
    copyright = {https://www.ams.org/publications/copyright-and-permissions},
    issn = {0273-0979, 1088-9485},
    url = {https://www.ams.org/bull/2024-61-02/S0273-0979-2024-01834-5/},
    doi = {10.1090/bull/1834},
    language = {en},
    number = {2},
    journal = {Bulletin of the American Mathematical Society},
    author = {Venkatesh, Akshay},
    month = feb,
    year = {2024},
    pages = {203--210},
}

@article{carlsson_topology_2009,
    title = {Topology and data},
    volume = {46},
    issn = {0273-0979},
    url = {http://www.ams.org/journal-getitem?pii=S0273-0979-09-01249-X},
    doi = {10.1090/S0273-0979-09-01249-X},
    language = {en},
    number = {2},
    journal = {Bulletin of the American Mathematical Society},
    author = {Carlsson, Gunnar},
    month = jan,
    year = {2009},
    pages = {255--308},
}

@article{euler1758elementa,
  author = {Euler, Leonhard},
  title = {Elementa doctrinae solidorum},
  journal = {Novi Commentarii academiae scientiarum Petropolitanae},
  volume = {4},
  year = {1758},
  pages = {109--140},
  url ={https://scholarlycommons.pacific.edu/euler-works/230}
}

@book{richeson_eulers_2008,
    address = {Princeton},
    title = {Euler's gem: the polyhedron formula and the birth of topology},
    isbn = {978-0-691-12677-7},
    shorttitle = {Euler's gem},
    language = {eng},
    publisher = {Princeton university press},
    author = {Richeson, David Scott},
    year = {2008},
}

@article{avigad_mathematics_2024,
    title = {Mathematics and the formal turn},
    volume = {61},
    copyright = {https://www.ams.org/publications/copyright-and-permissions},
    issn = {0273-0979, 1088-9485},
    url = {https://www.ams.org/bull/2024-61-02/S0273-0979-2024-01832-1/},
    doi = {10.1090/bull/1832},
    language = {en},
    number = {2},
    journal = {Bulletin of the American Mathematical Society},
    author = {Avigad, Jeremy},
    month = feb,
    year = {2024},
    pages = {225--240},
}

@book{edwards_galois_1998,
    address = {New York Heidelberg},
    edition = {Corr. 3. printing},
    series = {Graduate texts in mathematics},
    title = {Galois theory},
    isbn = {978-0-387-90980-6 978-3-540-90980-4},
    language = {eng},
    number = {101},
    publisher = {Springer},
    author = {Edwards, Harold M.},
    year = {1998},
}

@incollection{doi:10.1142/9789812564894_0013,
    title = {Enumerable sets are diophantine},
    doi = {10.1142/9789812564894_0013},
    booktitle = {Mathematical logic in the 20th century},
    author = {Matijasevič, Ju. V.},
    pages = {269--273},
}

@article{avigad_varieties_2021,
    title = {Varieties of mathematical understanding},
    volume = {59},
    issn = {0273-0979, 1088-9485},
    url = {https://www.ams.org/bull/2022-59-01/S0273-0979-2021-01726-5/},
    doi = {10.1090/bull/1726},
    language = {en},
    number = {1},
    journal = {Bulletin of the American Mathematical Society},
    author = {Avigad, Jeremy},
    month = feb,
    year = {2021},
    pages = {99--117},
}

@misc{gowers_how_nodate,
    title = {How can it be feasible to find proofs?},
    url = {https://drive.google.com/file/d/1-FFa6nMVg18m1zPtoAQrFalwpx2YaGK4/view?usp=embed_facebook},
    journal = {Google Docs},
    author = {Gowers, W. T.},
    year = {2022}
}

@book{green_superstring_2012,
    address = {Cambridge},
    edition = {25th anniversary edition},
    series = {Cambridge monographs on mathematical physics},
    title = {Superstring theory: {Volume} 1: {Introduction}},
    isbn = {978-1-139-24856-3 978-1-139-53120-7 978-1-139-52653-1},
    shorttitle = {Superstring theory},
    language = {eng},
    publisher = {Cambridge University Press},
    author = {Green, Michael B. and Witten, Edward and Schwarz, John H.},
    year = {2012},
}

@incollection{Wheeler1989-WHEIPQ,
	author = {John Archibald Wheeler},
	booktitle = {Proceedings III International Symposium on Foundations of Quantum Mechanics},
	editor = {Wheeler John Archibald},
	pages = {354--358},
	title = {Information, Physics, Quantum: The Search for Links},
	year = {1989}
}

@misc{novikov_alphaevolve_nodate,
      title={AlphaEvolve: A coding agent for scientific and algorithmic discovery}, 
      author={Alexander Novikov and Ngân Vũ and Marvin Eisenberger and Emilien Dupont and Po-Sen Huang and Adam Zsolt Wagner and Sergey Shirobokov and Borislav Kozlovskii and Francisco J. R. Ruiz and Abbas Mehrabian and M. Pawan Kumar and Abigail See and Swarat Chaudhuri and George Holland and Alex Davies and Sebastian Nowozin and Pushmeet Kohli and Matej Balog},
      year={2025},
      eprint={2506.13131},
      archivePrefix={arXiv},
      primaryClass={cs.AI},
      url={https://arxiv.org/abs/2506.13131}, 
}

@inproceedings{polu_formal_2023,
    author = {Polu, Stanislas and Han, Jesse Michael and Zheng, Kunhao and Baksys, Mantas and Babuschkin, Igor and Sutskever, Ilya},
    title = {Formal mathematics statement curriculum learning},
    booktitle={International Conference on Learning Representations (ICLR)},
    year={2023},
    url={https://openreview.net/forum?id=-P7G-8dmSh4}
}

@incollection{Turing1947,
  author    = {Alan M. Turing},
  title     = {Lecture to the London Mathematical Society on 20 February 1947},
  booktitle = {A.M. Turing's ACE Report of 1946 and Other Papers},
  editor    = {B. E. Carpenter and R. W. Doran},
  publisher = {MIT Press},
  address = {Cambridge, Massachusetts},
  year      = {1986},
  series    = {Charles Babbage Institute Reprint Series for the History of Computing},
  volume    = {10},
  pages     = {86--105},
  url       = {https://www.vordenker.de/downloads/turing-vorlesung.pdf},
}

@misc{meyes_ablation_2019,
    title = {Ablation {Studies} in {Artificial} {Neural} {Networks}},
    doi = {10.48550/arXiv.1901.08644},
    publisher = {arXiv},
    author = {Meyes, Richard and Lu, Melanie and Puiseau, Constantin Waubert de and Meisen, Tobias},
    month = feb,
    year = {2019},
}

@book{bott_differential_2008,
    address = {New York},
    edition = {4. print},
    series = {Graduate texts in mathematics},
    title = {Differential forms in algebraic topology},
    isbn = {978-0-387-90613-3 978-3-540-90613-1},
    language = {eng},
    number = {82},
    publisher = {Springer},
    author = {Bott, Raoul and Tu, Loring W.},
    year = {2008},
}

@misc{openai_gpt5_math_discovery_2025,
  author       = {{OpenAI}},
  title        = {How GPT-5 helped mathematician Ernest Ryu solve a 40-year-old open problem},
  year         = {2025},
  month        = nov,
  day          = {24},
  url          = {https://openai.com/index/gpt-5-mathematical-discovery/},
  organization = {OpenAI},
}

@book{chemlaHistoryMathematicalProof2015,
    address = {Cambridge},
    title = {The history of mathematical proof in ancient traditions},
    isbn = {978-1-107-52753-9},
    language = {eng},
    publisher = {Cambridge university press},
    author = {Chemla, Karine},
    year = {2015},
}

@article{Appel1977429,
    title = {Every planar map is four colorable part {I}: {Discharging1}},
    volume = {21},
    url = {https://www.scopus.com/inward/record.uri?eid=2-s2.0-84972498411&doi=10.1215%2fijm%2f1256049011&partnerID=40&md5=8bc9927e93f9d28b3be111d15703aebf},
    doi = {10.1215/ijm/1256049011},
    number = {3},
    journal = {Illinois Journal of Mathematics},
    author = {Appel, K. and Haken, W.},
    year = {1977},
    pages = {429 -- 490},
}

@article{Appel1977491,
    title = {Every planar map is four colorable part {II}: {Reducibility1}},
    volume = {21},
    url = {https://www.scopus.com/inward/record.uri?eid=2-s2.0-84972500815&doi=10.1215%2fijm%2f1256049012&partnerID=40&md5=b1c41a507c61a005a58bc2725832857d},
    doi = {10.1215/ijm/1256049012},
    number = {3},
    journal = {Illinois Journal of Mathematics},
    author = {Appel, K. and Haken, W. and Koch, J.},
    year = {1977},
    pages = {491 -- 567},
}

@incollection{VonNeumann1964-VONTFF,
	author = {Johann Von Neumann},
	booktitle = {Philosophy of Mathematics},
	editor = {P. Benacerraf H. Putnam},
	publisher = {Prentice-Hall},
	title = {The Formalist Foundations of Mathematics},
    address = {New Jersey, US},
	year = {1964}
}

@article{taoWHATGOODMATHEMATICS,
  author       = {Tao, Terence},
  title        = {What Is Good Mathematics?},
  journal      = {Bulletin of the American Mathematical Society},
  series       = {New Series},
  volume       = {44},
  number       = {4},
  pages        = {623--634},
  year         = {2007},
  month        = oct,
  doi          = {10.1090/S0273-0979-07-01168-8}
}

@online{freedman2025poincare,
  author       = {Freedman, Michael},
  title        = {Millennium Prize Problems Lecture – Michael Freedman: The Poincaré Conjecture and Mathematical Discovery},
  year         = {2025},
  month        = sep,
  url          = {https://www.youtube.com/watch?v=60X5M1FhmUc},
  note         = {YouTube video},
}

@article{hiltonBriefSubjectiveHistory1988,
    title = {A {Brief}, {Subjective} {History} of {Homology} and {Homotopy} {Theory} in {This} {Century}},
    volume = {61},
    issn = {0025-570X, 1930-0980},
    url = {https://www.tandfonline.com/doi/full/10.1080/0025570X.1988.11977391},
    doi = {10.1080/0025570X.1988.11977391},
    language = {en},
    number = {5},
    journal = {Mathematics Magazine},
    author = {Hilton, Peter},
    month = dec,
    year = {1988},
    pages = {282--291},
}

@online{buzzardAIMath,
  author       = {{Kevin Buzzard}},
  title        = {{“Can AI do Mathematics”}},
  year         = {2024},
  month        = {Mar},
  url          = {https://www.youtube.com/watch?v=O0F6EFyDA58},
  note         = {Talk}
}

@inbook{Hales_2014, 
    address={Cambridge}, 
    series={Lecture Notes in Logic}, title={Mathematics in the age of the Turing machine}, 
    booktitle={Turing’s Legacy: Developments from Turing’s Ideas in Logic}, 
    publisher={Cambridge University Press}, author={Hales, Thomas C.}, 
    editor={Downey, RodEditor}, 
    year={2014}, 
    pages={253–298}, 
    collection={Lecture Notes in Logic}
}

@book{gilkeyInvarianceTheoryHeat2018,
    address = {Boca Raton},
    edition = {2nd ed},
    series = {Studies in {Advanced} {Mathematics} {Ser}},
    title = {Invariance {Theory}: {The} {Heat} {Equation} and the {Atiyah}-{Singer} {Index} {Theorem}},
    isbn = {978-0-8493-7874-4 978-1-351-43643-4},
    shorttitle = {Invariance {Theory}},
    language = {eng},
    number = {v.16},
    publisher = {Chapman and Hall/CRC},
    author = {Gilkey, Peter B. and Krantz, Steven George},
    year = {2018},
}

@book{grothendieckRecoltesSemaillesReflexions2021,
  title = {{R\'ecoltes et semailles: r\'eflexions et t\'emoignage sur un pass\'e de math\'ematicien}},
  author = {Grothendieck, Alexander},
  isbn={9782072889752},
  year={2022},
  publisher={{\'E}ditions Gallimard},
  series={Tel},
  address = {Paris},
  note = {English translation by Tong Zhou:},
  url = {https://tongchow.github.io/}
}

@article{Lewis01081970,
    title = {Holes},
    volume = {48},
    doi = {10.1080/00048407012341181},
    number = {2},
    journal = {Australasian Journal of Philosophy},
    publisher = {Routledge},
    author = {Lewis, David and Lewis, Stephanie},
    year = {1970},
    pages = {206--212},
}

\begin{appendices}
\section{Collection of statements}
\label{app:statments}

Table \ref{tab:theorems} contains some notable statements found with a range of models and datasets. Statements with essentially the same mathematical meaning occurred many times in different symbolic forms; this shows a sample. On average, each statement corresponds to one execution run. 

\medskip

\noindent \textbf{Notation Key:} For a boundary map $\partial_i$:
\[
n(\partial_i) = \text{nullity}(\partial_i), \quad r(\partial_i) = \text{rank}(\partial_i), \quad h(\partial_i) = \text{height}(\partial_i), \quad w(\partial_i) = \text{width}(\partial_i).
\label{eq:variable_key}
\]

\begin{table}[h]
\centering
\renewcommand{\arraystretch}{2}

\makebox[\textwidth][c]{%
\begin{tabular}{c c p{6cm} >{\centering\arraybackslash}m{4cm} c}
\toprule
\textbf{No.} & \textbf{Dataset} & \textbf{Statement} & \textbf{Interpretation} & \textbf{Proof Feedback}\\
\midrule

(1) $\mathcal{M}_0$ & $\mathcal{D}_2$ & 
$\begin{aligned}
& 
r(\partial_2)\neq n(\partial_1)-h(\partial_1)+r(\partial_1)\\ 
& \implies n(\partial_2)+h(\partial_2)\neq h(\partial_1)+w(\partial_2)
\end{aligned}$ 
& 
$V-E+F= b_2\implies b_0= b_1$&
1\\

\midrule
(2) $\mathcal{M}_0$ & $\mathcal{D}_0$ & 
$\begin{aligned}
    & n(\partial_1) + 1 = w(\partial_2) \\
    & \implies h(\partial_1)-w(\partial_1)+w(\partial_2)\neq0
\end{aligned}$ 
& 
$\begin{aligned}
\{b_0&=1\}\;  \land\;b_1 = 0\;\land\;\{b_2=1\}\\
& \implies V-E+F\neq0   
\end{aligned}
$
& 1
\\
\midrule
(3) $\mathcal{M}_0$ & $\mathcal{D}_0$ & 
$\begin{aligned}
    & r(\partial_1) = w(\partial_1) - r(\partial_2) \\
    & \implies w(\partial_1) - n(\partial_2)\neq r(\partial_2) +h(\partial_1) 
\end{aligned}$ 
& 
$\begin{aligned}
\{b_0&=1\}\; \land\;b_1 = 0\;\land\;\{b_2=1\}\\
& \implies V-E+F\neq0   
\end{aligned}
$
& 1
\\
\midrule
(4) $\mathcal{M}_0$ & $\mathcal{D}_1$ & 
$\begin{aligned}
    & n(\partial_1) = r(\partial_2)\;\land\;n(\partial_2)*w(\partial_2)=n(\partial_2) \\
    & \implies w(\partial_1)\neq h(\partial_1) + n(\partial_2) + r(\partial_2)
\end{aligned}$ 
& 
$\begin{aligned}
    \{&b_0=1\}\;\land\;b_1=0\\ &\;\land\;b_2*\text{dim}(C_2)=b_2\\ &\implies V-E+F\neq0   
\end{aligned}
$
& 1
\\
\midrule
(5) $\mathcal{M}_2$ & $\mathcal{D}_2$ & 
$\begin{aligned}
    & n(\partial_1)=r(\partial_2)\;\land\;n(\partial_2)=1\\
    &\implies n(\partial_2)=h(\partial_1)-r(\partial_1)
\end{aligned}$ 
& 
$\begin{aligned}
    &b_1=0\;\land b_2=1\\
    & \implies b_0=b_2
\end{aligned}$
& 1
\\
\midrule
(6) $\mathcal{M}_2$ & $\mathcal{D}_2$ & 
$\begin{aligned}
    & n(\partial_1)=r(\partial_2)\;\land\;n(\partial_2)=1\\
    &\implies h(\partial_1)-r(\partial_1)=1
\end{aligned}$ 
& 
$\begin{aligned}
    &b_1=0\;\land\;  b_2=1\\
     & \implies b_0=1
\end{aligned}$
& 1
\\
\midrule

(7) $\mathcal{M}_0$ & $\mathcal{D}_0$ & 
$\begin{aligned}
    & (w(\partial_2) = n(\partial_1) - n(\partial_2)) \land (r(\partial_2) = n(\partial_1)) \\
    & \implies (w(\partial_1) + 1 - h(\partial_1) = r(\partial_2)) \\
    & \quad \quad \land (n(\partial_1) = w(\partial_2) + 1)
\end{aligned}$ 
& 
$\begin{aligned}
\{&b_0=1\}\;\land\;b_1=0\;\land\;\\ &b_1=2\;\land\;\{b_2=1\}\\ \implies &V-E+F = 2\;\land\;b_1=2   
\end{aligned}$

& Trivially 1
\\

\midrule
(8) $\mathcal{M}_0$ & $\mathcal{D}_0$ & 
$\begin{aligned}
    n(\partial_1) = r(\partial_1) -h(\partial_1)+w(\partial_2)
\end{aligned}$ 
& 
$b_2 - 1=b_1$&
0\\
\midrule
(9) $\mathcal{M}_0$ & $\mathcal{D}_2$ & 
$\begin{aligned}
    h(\partial_1)-r(\partial_1)=n(\partial_2)
\end{aligned}$ 
& 
$b_0=b_2$&
0\\

\bottomrule
\end{tabular}%
}

\vspace{0.2pt}
\caption{A collection of statements, showing the raw symbolic form, a mathematical interpretation, and the model and dataset that produced it. In the fifth column, 0/1 represents whether the statement was proven/not proven in Lean. In the Interpretation column, atomic formulae contained in curly braces \{.\} were in the premise set for that experiment. We also allow minor rewrites (i.e. substitutions) from the premise set corresponding to the execution. During a run, the system will typically pass through many false statements, and some may even receive reward, such as (8) and (9). Number (8) is true on spheres but false on tori. Recall both $\chi:=V-E+F$ and $\chi:=b_0-b_1+b_2$.
}
\label{tab:theorems}
\end{table}

Learning problem \ref{lp1} is completed by statements (1) - (4), which find and combine both definitions of the Euler characteristic. Although the interpretations of (2) and (3) are the same, the forms found by the system are different. Statements
(5) and (6) are also somewhat noteworthy, without using $\chi$. Finally, although (1) - (7) are proven by the system using linear algebra, some may be best (and easily) understood using the Classification theorem and/or Poincar\'e duality \cite{gallierGuideClassificationTheorem2013,hatcher_algebraic_2001}.

\section{Formal translation}
\label{app:LeanTranslation}
This section summarises the proving part of the environment. Statements output by the Conjecturing agent are initially translated into Lean4 language. Firstly, this adds boilerplate code to define the relevant variables, e.g. defining vector spaces and linear maps in a way that is standard in mathlib \cite{community_lean_2020}. 
Secondly, the symbolic statement is translated into a Lean theorem, with the experiment's premise set becoming the hypotheses.

With this formulation, proving a given true statement becomes an algebraic manipulation of the relevant ranks of modules.
Figure \ref{fig:Lean4Translation} shows all the parts of the translation code, as well as the full proving process for statement \textbf{(1)} in Table \ref{tab:theorems}.

\begin{figure}[H]
    \vspace{-0.7cm}
    \centering
    \makebox[\textwidth][c]{%
        \includegraphics[width=0.75\linewidth]{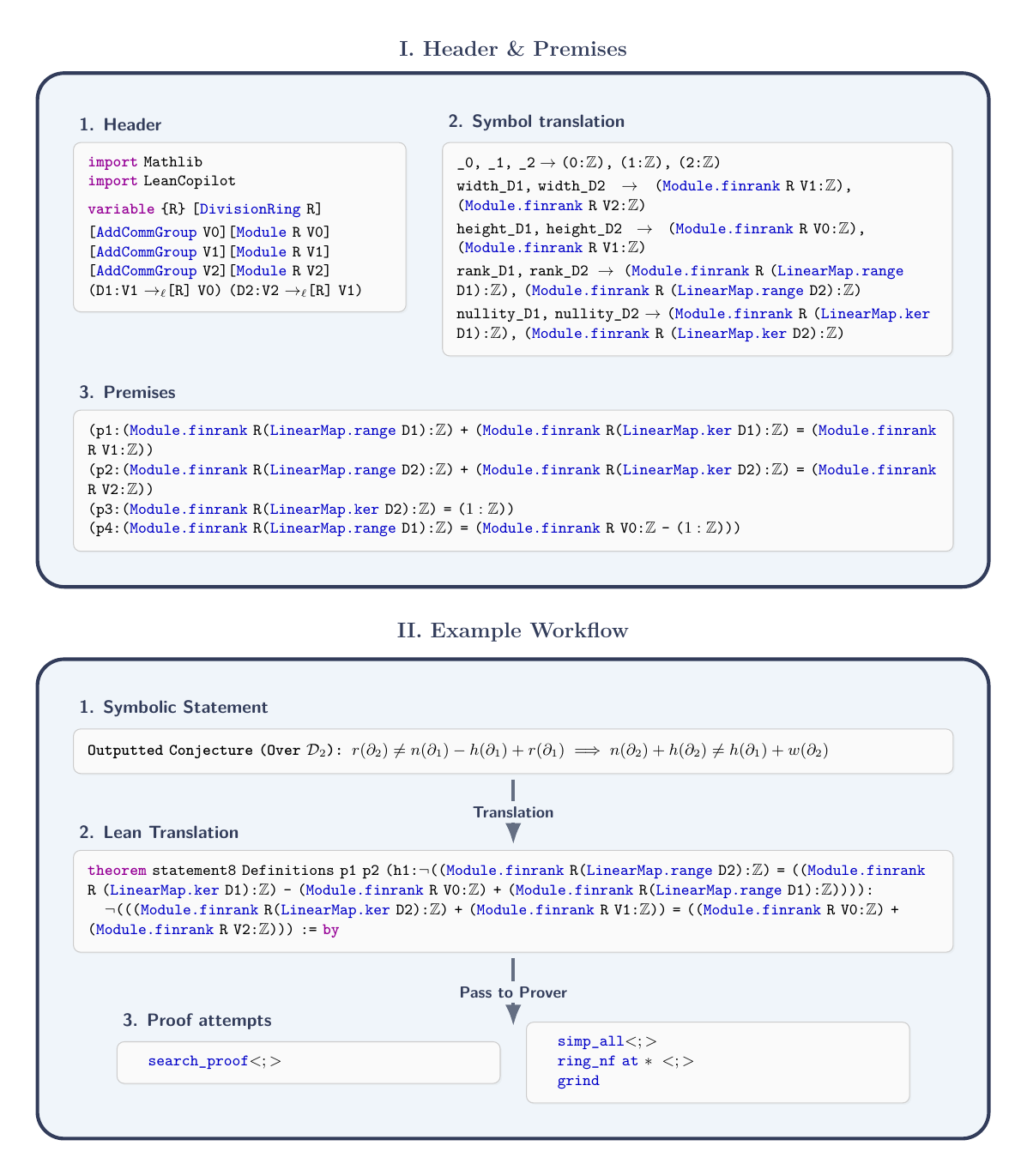}%
    }
    
    \caption{ In Subfigure I, the header code defines \texttt{V0}, \texttt{V1} and \texttt{V2} as $\texttt{R}$-modules over a division ring \texttt{R}, and $\partial_i$s (represented via \texttt{D1} and \texttt{D2}) as $\texttt{R}$-linear maps. \texttt{p1}-\texttt{p4} are the rank-nullities, orientability and connectedness premises.
    The workflow in Subfigure II shows the translation in action for statement \textbf{(1)} over dataset $\mathcal{D}_2$, picking out just $\texttt{p1}$ and $\texttt{p2}$, the rank-nullities. Step 3 shows the two provers we tried: on the left, Lean Copilot's \texttt{search\_proof}, and on the right, the pre-written proof using \texttt{grind} and \texttt{ring}, tactics that are suited to algebra. In our experiments we saw no qualitative difference between these provers.
    }
    \label{fig:Lean4Translation}
\end{figure}




\end{appendices}


\end{document}